\renewcommand{\l}[0]{k} 
\newcommand{\rr}{\epsilon} 
\newcommand{\thetaset}{\Theta}
\newcommand{\yp}{\tilde{\mathbf{y}}}
\newcommand{\intv}{\mathbb{\intdomain}}
\newcommand{\intdomain}{\mathbb{IR}}
\newcommand{\x}{\mathbf{X}}
\newcommand{\xtest}{\mathbf{x}}
\newcommand{\y}{\mathbf{y}}
\newcommand{\matrprod}{\mathbf{z}} 
\newcommand{\matrprodscal}{z}
\newcommand{\matrprodc}{\mathbf{C}} 
\newcommand{\Bias}[0]{\mathcal{M}} 
\newtheorem{example}{Example}[section]
\newtheorem{theorem}{Theorem}[section]
\algnewcommand\algorithmicforeach{\textbf{for each}}
\newcommand{\yu}{\y^u}
\newcommand{\yl}{\y^l}
\newcommand{\yuscal}{y^u}
\newcommand{\ylscal}{y^l}
\DeclareMathOperator*{\argmax}{argmax}
\DeclareMathOperator*{\argmin}{argmin}
\renewcommand{\l}[0]{k} 
\title{The Dataset Multiplicity Problem: How Unreliable Data Impacts Predictions}
\author[]{Anna P.~Meyer}
\author[]{Aws Albarghouthi}
\author[]{Loris D'Antoni}
\affil[]{University of Wisconsin - Madison \protect\\ \texttt{\{annameyer,aws,loris\}@cs.wisc.edu}}
\begin{document}
\date{}
\maketitle

\begin{abstract}
We introduce dataset multiplicity, a way to study how inaccuracies, uncertainty, and social bias in training datasets impact test-time predictions. 
The dataset multiplicity framework asks a counterfactual question of what the set of resultant models  (and associated test-time predictions) would be if we could somehow access \textit{all} hypothetical, unbiased versions of the dataset. 
We discuss how to use this framework to encapsulate various sources of uncertainty in datasets' factualness, including systemic social bias, data collection practices, and noisy labels or features.
We show how to exactly analyze the impacts of dataset multiplicity for a specific model architecture and type of uncertainty: linear models with label errors. 
Our empirical analysis shows that real-world datasets, under reasonable assumptions, contain many test samples whose predictions are affected by dataset multiplicity. 
Furthermore, the choice of domain-specific dataset multiplicity definition determines what samples are affected, and whether different demographic groups are disparately impacted. 
Finally, we discuss implications of dataset multiplicity for machine learning practice and research, including considerations for when model outcomes should not be trusted. 
\end{abstract}

\section{Introduction}

Datasets that power machine learning algorithms are supposed to be accurate and fully representative of the world, but in practice, this level of precision and representativeness is impossible~\cite{jacobsMeasurement, raji2021ai}. 
Datasets display inaccuracies --- which we use as a catch-all term for both errors and nonrepresentativeness --- due to sampling bias~\cite{genderShades}, human errors in label or feature transcription~\cite{northcutt2021pervasive,vasudevanBagel}, and sometimes deliberate poisoning attacks~\cite{biggio2012poisoning,shafahi2018poison}. 
Datasets can also reflect undesirable societal inequities. 
But more broadly, datasets never reflect objective truths because the worldview of their creators is imbued in the data collection and postprocessing~\cite{jacobsMeasurement,paullada,raji2021ai}. 
Additionally, seemingly-trivial decisions in the data collection or annotation process influence exactly what data is included, or not~\cite{paullada, rechtImagenet}. In psychology, these minute decisions have been termed `researcher degrees of freedom,' i.e., choices that can inadvertently influence conclusions that one ultimately draws from the data analysis~\cite{simmonsPsychology}. 
In this paper, we study how unreliable data of all kinds impacts the predictions of the models trained on such data and frame this analysis as a `multiplicity problem.'

Multiplicity occurs when there are multiple explanations for the same phenomenon. 
Many recent works in machine learning have studied predictive multiplicity, which occurs when multiple models have equivalent accuracy, but still give different predictions to individual samples~\cite{costonFairness,predictive-multiplicity,semenovaExistence}.  
A consequence of predictive multiplicity is procedural unfairness concerns; namely, defending the choice of model may be challenging when there are alternatives that give more favorable predictions to some individuals~\cite{blackMultiplicity}.
But model selection is just one source of multiplicity.
In this paper, we argue that it makes sense to consider training datasets through a multiplicity lens, as well. To do so, we will consider a \emph{set of datasets}. 
Intuitively, this set captures all datasets that could have been collected if the world was slightly different, i.e., if we could correct the unknown inaccuracies in the data. 
We illustrate this idea through the following example.

\paragraph{Example dataset multiplicity use case}
Suppose a company wants to deploy a machine learning model to decide what to pay new employees. 
They have access to current employees' backgrounds, qualifications, and salaries.
However, they are aware  that in various societies, including the United States, there is a gender wage gap, i.e., systematic disparities in the average pay between men and women~\cite{censusWageGap}. 
Economists believe that while some of the gap is attributable to factors like choice of job industry, overt discrimination also plays a role~\cite{blauWageGap}. 
But even though we know that discrimination exists, it is very difficult to adjudicate whether specific compensation decisions are affected by discrimination. 

The original dataset is shown, along with the best-fit model $f$, in \Cref{fig:introex_v2}(a). 
Note that under $f$, the proposed salary for a new employee $\mathbf{x}$ is \$\num{73000}. 
But an alternate possibility of the `ground-truth,' debiased dataset is shown in \Cref{fig:introex_v2}(b). 
In the world that produced this dataset, perhaps there is no gender discrimination, so, ideally, we would learn from this dataset and yield model $g$, which places $\mathbf{x}$'s salary at \$\num{78000}. 

\begin{figure}
    \centering
    \includegraphics[width=14cm]{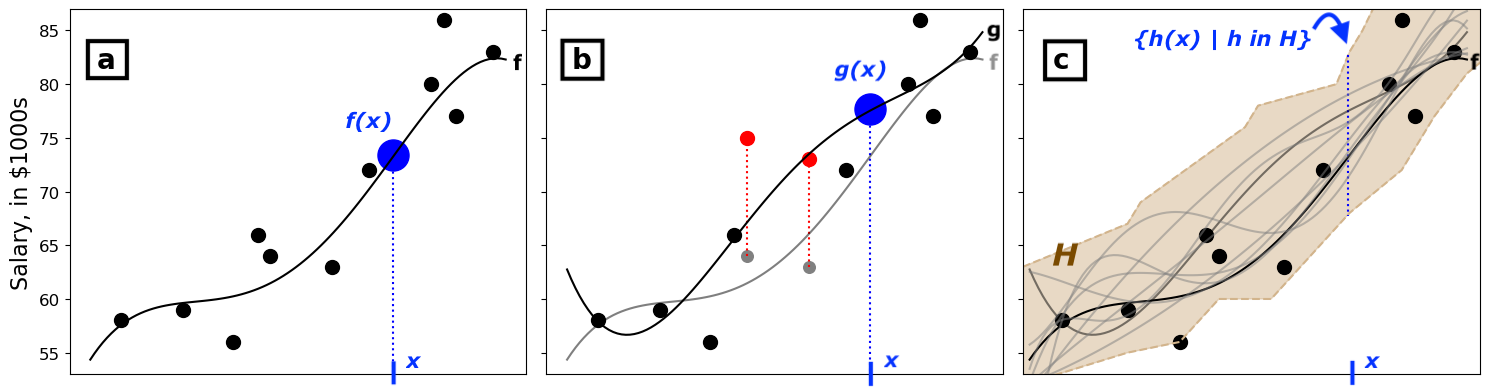}
    \caption{Salary prediction: (a) Training dataset and resultant model $f$. The prediction for the test sample $\mathbf{x}$ is \$\num{73000}. (b) Training dataset with two label modifications (in red) along with the newly-learned model ($g$). The prediction for the test sample $\mathbf{x}$ is now about \$\num{78000}. (c) $H$ contains the set of models $h$ that we could have obtained based on various small modifications to the provided dataset. We see that $\mathbf{x}$'s prediction can be anywhere between \$\num{68000} and \$\num{83000} (blue dotted line). }
    \label{fig:introex_v2}
\end{figure}

The modified dataset in \Cref{fig:introex_v2}(b) is just one example of how different data collection practices --- in this case, collecting data from an alternate universe where there is no gender-based discrimination in salaries --- can lead to various datasets that produce models that make conflicting predictions for individual test samples. 

But what if we could consider the entire range of candidate `ground-truth' datasets? 
For example, all datasets where each woman's salary may be increased by up to \$\num{10000} to account for the impacts of potential gender discrimination. 
\Cref{fig:introex_v2}(c) shows what we could hypothetically do if that set of datasets were available --- and we had unlimited computing power. 
Rather than outputting a single model, we bound the range of models (the highlighted region) obtainable from alternate-universe training datasets. We could then use this set of models to obtain a confidence interval for $\mathbf{x}$'s prediction - in this case, \$\num{68000}-\$\num{83000}. This range corresponds to 
the \emph{dataset multiplicity robustness} of $\mathbf{x}$, that is, the sensitivity of the model's prediction on $\mathbf{x}$ given specific types of changes in the training dataset. 

Work in algorithmic stability, robust statistics, and distributional robustness has attempted to quantify how varying the training data impacts downstream predictions. However, as illustrated by the example, we aim to find the \emph{pointwise} impact of uncertainty in training data rather than studying robustness purely in aggregate, and we want our analysis to encompass the \emph{worst-case} (i.e., adversarial) reasonable alternate models, unlike the purely statistical approaches. 

\paragraph{Our vision for dataset multiplicity in machine learning}
The proposed solution in the above example suffers from a number of drawbacks. 
First, is the solution solving the right problem? 
That is, is dataset multiplicity a better choice for reasoning about unreliable data than existing learning theory techniques? 
Second, how do we define what is a reasonable alternate-universe dataset to include in the set of datasets? 
Third, even if we had a set of datasets encompassing all alternate universes, how would we compute the graph in \Cref{fig:introex_v2}(c)? 
And finally, what are the implications for fair and trustworthy machine learning? How can, and should, we incorporate dataset multiplicity into machine learning research, development and deployment?

We address all of these concerns in this paper through the following contributions: 

\begin{description}
\item[Conceptual Contribution 1] We formally define the dataset multiplicity problem
and give several example use cases to provide intuition of how to define the set of reasonable alternate-universe
datasets (\Cref{sec:problemIntro})

\item[Theoretical Contribution] We present a novel technique that, for linear models with label errors, can exactly characterize the range of a test sample's prediction. We also show
how to over-approximate the range of models (i.e., \Cref{fig:introex_v2}(c)) (\Cref{sec:linear})

\item[Experimental Contribution] We use our approaches to evaluate the effects
of dataset multiplicity on real-world datasets with a particular eye towards how demographic subgroups are differently affected (\Cref{sec:results})

\item[Conceptual Contribution 2] We explore the implications of dataset multiplicity  (\Cref{sec:implications})
\end{description}

\section{The Dataset Multiplicity Problem}\label{sec:problemIntro}

We formalize \emph{dataset multiplicity} as a technique that conceptualizes uncertainty and societal bias in training datasets and discuss how to use dataset multiplicity as a tool to critically assess machine learning models' outputs.

We assume the following supervised machine learning setup: 
we start with a fixed, deterministic learning algorithm $A$ and a training dataset $D=(\x,\y)$ 
with features $\x\in\mathbb{R}^{n\times d}$ 
and outputs $\y\in\mathbb{R}^n$.\footnote{
$A$ is inclusive of all modeling steps including preprocessing the training data, selecting the model hyperparameters through a holdout validation set (segmented off from $D$), etc.
 }
We run $A$ on $D$ to get a model $f$, that is, $f=A(D)$. Given a test sample $\xtest$, we obtain an associated prediction  $\hat{y}=f(\xtest)$.

\subsection{Defining Dataset Multiplicity}\label{sec:definitions}
We describe dataset multiplicity for a dataset $D$ with a \emph{dataset multiplicity model} $\Bias(D)$. Intuitively, we want $\Bias(D)$ to be the smallest set that contains all conceivable alternatives to $D$. 
We present a few  examples of $\Bias$:


\paragraph{$\Bias$ under societally-biased labels} We continue the example from the introduction. Suppose we believe that  women in a dataset are underpaid by up to \$\num{10000} each. In this case, we define $\Bias(D)$ as the set of all datasets $D'$ with identical features to $D$, such that all labels for men in $D'$ are identical to the labels in $D$ and all labels for women in $D'$ are the same as in $D$, or increased by up to $\$\num{10000}$.  

\paragraph{$\Bias$ under noisy measurement} Suppose a dataset contains a weight feature and where data was collected using a tool whose measurements may be inaccurate by up to 5 grams. If weight is the $k^{\textrm{th}}$ feature, then we can represent $\Bias(D)$ as the set of all datasets $D'$ that are identical to $D$ except that feature $k$ may differ by up to 5 grams.

\paragraph{$\Bias$ given unreliable feature data} People commonly misreport their height on dating profiles~\cite{datingToma}: men add 0.5"($\pm 0.88$) to their height, on average, while women add 0.17"($\pm 0.98$). So, for a given man with reported height $h$ we can be 95\% confident that his height is in $[h-1.26, h+2.26]$ and for a women with height $h$, $[h-1.79, h+2.13]$. If a dataset contains height (as self-reported via dating apps) in position $i$, then $\Bias(D)$ will contain all datasets $D'$
that are equivalent to $D$, except that each sample's $i^{\textrm{th}}$ feature may be modified according to the gender-specific 95\% confidence intervals.

\paragraph{$\Bias$ with missing data}  Getting a representative population sample can be a challenge for many data collection tasks. 
Suppose we suspect that a dataset underrepresents a specific minority population by up to \num{20} samples. 
If the undersampled group has feature $i=k$, then $\Bias(D)$ will be the set of datasets $D'$  such that $D\subseteq D'$, there are at most 20 additional samples in $D'$, and all new samples have feature $i=k$. 
(We additionally assume that all new samples are in the proper feature space, i.e., that they are valid data samples.)
%

\subsection{Learning with Dataset Multiplicity}
We define $A(\Bias(D))$ to be the set of all models obtainable by training with some dataset in $\Bias(D)$, i.e., $A(\Bias(D)) = \{f \mid \exists D'\in\Bias(D)\text{ s.t. }A(D')=f\}$. Given this set of models, we can inquire about the range of possible predictions for a test sample $\xtest$. In particular, we can ask whether $\xtest$ is \emph{robust} to dataset multiplicity, that is, will it receive a different prediction if we started with any other model in $\Bias(D)$? Formally, we say that a deterministic algorithm $A$, a dataset $D$, and a dataset multiplicity model $\Bias(D)$ are $\rr$-robust to dataset multiplicity on a sample $\mathbf{x}$ if \Cref{eq:robustness} holds. (Equivalently, we will say that $\mathbf{x}$ is $\rr$-robust.)
\begin{equation}\label{eq:robustness}
    D'\in\Bias(D)\implies A(D')(x) \in [A(D)(x)-\epsilon, A(D)(x)+\epsilon]
\end{equation}

\begin{example}
Returning to the example from the introduction, the test sample $\mathbf{x}$ originally receives a prediction of $\$\num{78000}$ (\Cref{fig:introex_v2}a). \Cref{fig:introex_v2}c shows that $\mathbf{x}$ is \emph{not} $\rr$-robust for $\rr=\$\num{5000}$, since it can receive any prediction in $[\num{68000},\ \num{83000}]$, and $\num{78000}-\num{68000}>\num{5000}$. However, $\mathbf{x}$ is $\rr$-robust for $\rr=\num{10000}$. 
\end{example}

If a sample $\mathbf{x}$ is $\rr$-robust, then we can be certain that its prediction will not change by more than $\rr$ due to dataset multiplicity. In practice, this may mean we can deploy the prediction with greater confidence, or less oversight. Conversely, if $\mathbf{x}$ is not $\rr$-robust, then this means there is some plausible alternate training dataset that, when used to train a model, would result in a different prediction for $\mathbf{x}$. In this case, the prediction on $\mathbf{x}$ may be less trustworthy --- we discuss options for dealing with non-robustness in  \Cref{sec:implications}.

\subsection{Choosing a Dataset Multiplicity Model} 
We have discussed how to formalize $\Bias(D)$ given various conceptions of dataset inaccuracy; however, we have not discussed how to determine in what ways a given dataset may be inaccurate. In practice, these judgments should be made in collaboration with domain experts, both within the data science and social science realms. Still, there is no one normative, `right' answer of how to define $\Bias$ for a given situation --- any judgment will be normative. Furthermore, there may be multiple ways to describe the same social phenomenon, as illustrated by the following example:

\begin{example}
    The first example in \Cref{sec:definitions} formalizes gender discrimination in salaries. When index 0 is gender and value 1 is woman, we define $\Bias$ as $\Bias(D) = \{(\x,\y')\mid (\x_i)_0=1\implies y'_i\in[y_i, y_i+\num{10000}]\textrm{ and } (\x_i)_0 \neq 1\implies y_i=y'_i\}$. However, what if we frame the problem as men are overpaid, rather than women are underpaid? In that case, a more appropriate formalization would be $\Bias(D) = \{(\x,\y')\mid (\x_i)_0=0\implies y'_i\in[y_i-\num{10000}, y_i]\textrm{ and } (\x_i)_0 \neq 0\implies y_i=y'_i\}$.
\end{example}

As we will see in \Cref{sec:results_demo}, this variability in framing can affect the conclusions we draw about dataset multiplicity's impacts, highlighting the need for thoughtful reflection and interdisciplinary collaboration when choosing $\Bias$.


\section{The Dataset Multiplicity Problem for Linear Models with Label Errors}
\label{sec:linear}

We consider a special case of the dataset multiplicity problem introduced in \Cref{sec:problemIntro}, namely, linear models given noise or errors in the training data's labels. (We use the term `label' in the context of both linear regression and classification.) We present this analysis to begin to characterize the impact of dataset multiplicity on real-world datasets and models, and to provide an example for how we envision the study of dataset multiplicity's impacts to continue in future work.

We focus on linear models with label errors for a few reasons. First, linear models are well-studied and used in practice, especially with tabular data, which is common in areas with societal implications. Furthermore, complicated models like neural nets can often be conceived of as encoders followed by a final linear layer, making our results more widely applicable. Second, label errors and noise are common, well-documented realities in many applications~\cite{northcutt2021pervasive}. Finally, as we will see, the closed-form solution for linear regression allows us to solve this problem exactly, a challenge that is currently impractical even for other simple, widely-studied model families like decision trees~\cite{meyer}.

\subsection{Formulating the Dataset Label Multiplicity Problem}
We assume our dataset is of the form $D=(\x,\y)$ with feature matrix $\x\in\mathbb{R}^{n\times d}$ and output vector $\y\in\mathbb{R}^n$. (Even though $\y$ is continuous, we borrow terminology from classification to also refer to $y_i$ as the \emph{label} for $\mathbf{X}_i$.) At times, we will reference the interval domain, $\intv$, that is, $\intv=\{[a,b]\mid a,b\in\mathbb{R}, a\leq b\}$. 

\paragraph{Parameterizing $\Bias$ given label perturbations} We parameterize $\Bias$ given label noise with three parameters, $k$, $\Delta$, and $\phi$.  First, $k\in\mathbb{N}$ is an upper bound on the number of training samples that have an inaccurate label. Second, $\Delta\in\intv^n$ stores 
the amount that each label can change. The $i^\textrm{th}$ element of $\Delta$ is $[\delta_i^l,\delta_i^u]$, signifying that the true value of $y_i$ falls in the interval $[y_i+\delta_i^l, y_i+\delta_i^u]$. 
Finally, $\phi$ is predicate over the feature space specifying whether we can modify a given sample when, for example, label errors are limited to a population subgroup. 
Given $k$, $\Delta$, and $\phi$ we define $\Bias$ as 
$$ \Bias_{k,\Delta,\phi}((\x,\y)) = \{ (\x,\y') \mid \|\mathbb{1}[ \y \neq \y']\|_1 \leq k \land \forall i.y_i\neq y_i' \implies \phi(\mathbf{x}_i)  \land \forall i. y_i' - y_i \in \delta_i \}$$

We describe $k$, $\Delta$, and $\phi$ for the following examples: 
\begin{example} We assume that women in a salary dataset are underpaid by up to $\$\num{10000}$ each. 
 Since we place no limit on how many labels may be incorrect --- beyond the proportion of women in the dataset --- we set $k=n$, the total number of samples. Since labels may be underreported by up to $\$\num{10000}$, we set $\Delta=[0,\num{10000}]^n$.  And finally, since $\mathbf{x}_0=1$ means that $\mathbf{x}$ is a woman, we define $\phi(\mathbf{x}) = \mathbb{1}[ \mathbf{x}_0 = 1]  $ since we assume that only women's salaries may change.
\end{example}

\begin{example} (Spam filter) Suppose a dataset $D = (\x, \y)$ contains emails $\x$ that are labeled as not spam or spam (i.e., $\y\in \{-1,1\}^n$). From manual inspection of a small portion of the dataset, we estimate that up to 2\% of the emails are mislabeled. Since up to $2\%$ of the labels may be incorrect, we set $k=0.02n$. As the labels are binary, we can modify each label by $+/- 2$, depending on its original label.\footnote{The linearity of the algorithm ensures that all labels will remain valid, i.e., either -1 or 1.} Thus, $\Delta = [a,b]^n$ where $[a,b]_i=[0,2]$ when $y_i=-1$ and $[a,b]_i=[-2,0]$ when $y_i=1$. Finally, $\phi(\mathbf{x})=1$ since there are no limitations on which samples have inaccurate labels.
\end{example}

\subsection{Linear Regression Overview}
Our goal is to find the optimal linear regression parameter $\theta$, i.e., 
\begin{equation}\label{eq:linear}
    \theta=\argmin_{\theta\in\mathbb{R}^d} (\theta^T \x - \y)^2 
\end{equation}
Least-squares regression admits a closed-form solution\footnote{In practice, we implement ridge regression, $\theta=(\x^T\x-\lambda I)^{-1}\x^T\y$, for greater stability.}
\begin{equation}\label{eq:leastsquares}
    \theta = (\x^\top \x)^{-1} \x^\top \y
\end{equation}
for which we will analyze dataset label multiplicity.
We work with the closed-form solution, instead of a gradient-based one, as it is deterministic and holistically considers the whole dataset, allowing us to exactly measure dataset multiplicity by exploiting linearity (Rosenfeld et al. make an analogous observation~\cite{rosenfeld-randomized}). On medium-sized datasets and modern machines, computing this closed-form solution is efficient. 

Given a solution, $\theta$, to \Cref{eq:linear}, we output the prediction $\hat{y} = \theta^\top\xtest$ for a test point $\xtest$. 

\paragraph{Extension to binary classification}
Given a binary output vector $\y\in\{-1,1\}^n$, we find $\theta$ in the same way, but when making test-time predictions, use 0 as a cutoff between the two classes, i.e., given parameter vector $\theta$ and test sample $\xtest$, we return $1$ if $\theta^T\xtest>0$ and $-1$ otherwise. To evaluate robustness for binary classification, we care about whether the predicted label can change when training with any dataset $D'\in\Bias(D)$. Thus, if $\theta^T\xtest \geq 0$, then $\xtest$ is dataset multiplicity robust if there is no model $\theta'$ such that $(\theta')^T\xtest < 0$ (and vice-versa when $\theta^T\xtest <0$).

\subsection{Exact Pointwise Solution}\label{sec:exact}
Given a model $\theta$ and test sample $\xtest$ we can expand and rearrange $\theta^\top \xtest$ as follows:
$$
   \hat{y} = \theta^\top \mathbf{x} = ((\x^\top \x)^{-1} \x^\top \y )^\top \xtest 
                     = \underbrace{(\xtest^\top (\x^\top \x)^{-1}\x^\top)}_{\matrprod}\y
$$
This form is useful since it isolates $\y$, which under our dataset multiplicity assumption contains all of the dataset's uncertainty. We will use $\matrprod$ to denote $\mathbf{x}^\top (\x^\top \x)^{-1}\x^\top$.
Thus, our goal is to find
\begin{equation}\label{eq:yhat}
   (\_\ , \y') = \argmax_{(\_\ , \yp) \in\Bias_{\l,\Delta,\phi}(D)} |\matrprod \yp - \matrprod\y| 
\end{equation}
 and then to check whether $\|\hat{y} - \matrprod\y'\|< \rr$. If so, then we will have proved that the prediction for $\xtest$ is $\rr$-robust under $\Bias_{\l,\Delta,\phi}(D)$. (Conversely, if $\|\hat{y} - \matrprod\y'\|\geq \rr$, then $\y'$ is a counterexample proving that $\xtest$ is not $\rr$-robust under $\Bias_{\l,\Delta,\phi}(D)$.)

\paragraph{Solving for \Cref{eq:yhat}} One option is to formulate \cref{eq:yhat} as a mixed-integer linear program. However, due to the vast number of possible label perturbation combinations, this approach is prohibitively slow (we provide a runtime comparison with our approach in \cref{app:more-experiments}). 
Instead, we use the algorithmic technique presented in 
\Cref{alg:exact}. Intuitively, one iteration of the algorithm's inner loop identifies what output $y_i\in\y$ to modify so that we maximally increase $\matrprod\y$. 
After $k$ output modifications --- or once all outputs eligible for modification under $\phi$ have been modified --- we check whether the new prediction, $\hat{y}'$, has $\hat{y}'> \theta^T\xtest + \rr$. If this is the case, we stop because we have shown that $\xtest$ is not $\rr$-dataset multiplicity robust. Otherwise, we repeat a variation of the algorithm (see the appendix) to maximally decrease $\matrprod\y$ and check whether we can achieve $\hat{y}'<\theta^T \xtest - \rr$. 

\begin{algorithm}[!t]
    \caption{Find the allowable perturbation that makes $\hat{y}$  as large as possible, i.e., $\max_{(\x,\yp)\in\Bias_{\l,\Delta,\phi}((\x,\y))} \matrprod\yp $}
    \begin{algorithmic}
        \Require $\matrprod\in\mathbb{R}^n, (\x,\y)\in(\mathbb{R}^{n\times d},\mathbb{R}^n), \Delta\in\intdomain^n$ with $0\in\Delta$, $\l\geq 0$, $\phi:\mathbb{R}^d\rightarrow\{0,1\}$
        \State $\yu\gets\y$
        \If{$\matrprodscal_i\geq 0$}
             $\rho^+_i \gets \matrprodscal_i\delta_i^u\ $
        \textbf{else}
             ~$\rho^+_i \gets \matrprodscal_i\delta_i^l\ $
        \EndIf
        \If{not $\phi(\mathbf{x}_i)$}
            $\rho^+_i \gets 0 $
        \EndIf
        \State Let $\rho^+_{i_1},\ldots,\rho^+_{i_l}$ be the $l$ largest elements of $\rho^+$ by absolute value
        \ForEach{$\rho^+_{i_j}$}
            \If{$\matrprodscal_{i_j}\geq 0$}
                 ${\yuscal_{i_j}} \gets {\yuscal_{i_j}} + \delta_{i_j}^u$
            \textbf{else}~
                 ${\yuscal_{i_j}}\gets {\yuscal_{i_j}} + \delta_{i_j}^l$
            \EndIf
        \EndFor
        \State
        \Return $\matrprod\yu$
    \end{algorithmic}\label{alg:exact}
\end{algorithm}

\paragraph{Extension to binary classification}
The binary classification version of the algorithm works identically, except we check whether $\hat{y}$  rounds to a different class than $\theta^T\xtest$ to ascertain $\xtest$'s robustness.

\subsection{Over-Approximate Global Solution}\label{sec:approx}

In \Cref{sec:exact}, we described a procedure to find the exact dataset multiplicity range for a test point $\mathbf{x}$.
For every input $\mathbf{x}$ for which we want to know the dataset multiplicity, the procedure effectively relearns the worst-case linear regression model for $\mathbf{x}$.
In practice, we may want to explore the dataset multiplicity of a large number of samples, e.g., a whole test dataset, or we may need to perform online analysis.

We would like to understand the dataset multiplicity range for a large number of test points without performing linear regression for every input. 
%
%
We formalize capturing all linear regression models we may obtain as follows:
$$\thetaset = \{\theta\mid \theta = (\x^\top \x)^{-1} \x^\top \yp \textrm{ for some } (\x,\yp)\in\Bias_{\l,\Delta,\phi}(D)\}$$

To see whether $\xtest$ is  
$\rr$-robust, we check whether $\tilde{\theta}^\top \mathbf{x} \in [\theta^\top \mathbf{x}-\rr, \theta^\top \mathbf{x}+\rr]$ for all $\tilde{\theta}\in\thetaset$.
For ease of notation, let $\matrprodc=(\x^\top\x)^{-1}\x^\top$.
Note that $\matrprodc\in\mathbb{R}^{m\times n}$, while $\matrprod\in\mathbb{R}^{1\times n}$.

\paragraph{Challenges}
The set of weights $\thetaset$ is not enumerable and is non-convex (proof in appendix \ref{app:algs}). 
Our goal is to represent $\thetaset$ efficiently so that we can simultaneously apply
all weights $\theta \in \thetaset$ to a point $\mathbf{x}$.
Our key observation is that we can easily compute a hyperrectangular over-approximation of $\thetaset$.
In other words, we want to compute a set $\thetaset^a$ such that $\thetaset \subseteq \thetaset^a$.
Note that the set $\thetaset^a$ is an interval vector in $\intv^n$, since interval vectors represent hyperrectangles in Euclidean space.

This approach results in an overapproximation of the true dataset multiplicity range for a test sample $\xtest$ --- that is, some values in the range may not be attainable via any allowable training label modification.   

\paragraph{Approximation approach}
We will iteratively compute components of the vector $\thetaset^a$ 
by finding each coordinate $i$ as the following interval, where $\mathbf{c}_i$ are the column vectors of $\matrprodc$:
$$ 
\thetaset^a_i = \left[\min_{(\x,\y')\in\Bias_{\l,\Delta,\phi}(D)} \mathbf{c}_i \y', \max_{(\x,\y')\in\Bias_{\l,\Delta,\phi}(D)} \mathbf{c}_i \y'\right]
$$
To find $\min_{\y'\in\Bias_{\l,\Delta}(\y)} \mathbf{c}_i\y'$, we use the same process as in \Cref{sec:exact}.
Specifically, we use \Cref{alg:exact} to compute the lower and upper bounds of each $\thetaset^a_i$.
We show in the appendix that the interval matrix $\thetaset^a$ is the tightest possible hyperrectangular overapproximation of the set $\theta$.

\paragraph{Evaluating the impact of dataset multiplicity on predictions}
Given $\thetaset^a$ as described above, the output for an input $\mathbf{x}$ is provably robust to dataset multiplicity if
\begin{align}\label{eq:approx}
    (\thetaset^a)^\top \mathbf{x} \subseteq [\theta^\top \mathbf{x} - \rr, \theta^\top \mathbf{x} + \rr]
\end{align} 
Note that $(\thetaset^a)^\top \mathbf{x}$ is computed using standard interval arithmetic,
e.g., $[a,b] + [a',b'] = [a+a' , b+b']$.
Also note that the above is a one-sided check: we can only say that the model's output given $\mathbf{x}$ is robust to dataset multiplicity, but because $\thetaset^a$ is an overapproximation, if \Cref{eq:approx} does not hold,  we cannot conclusively say that the model's prediction on $\mathbf{x}$ is subject to dataset multiplicity. 
\section{Empirical Evaluation}\label{sec:results}

We use Python to implement the algorithms from \Cref{sec:exact,sec:approx} for measuring label-error multiplicity in linear models.\footnote{Our code is available at \url{https://github.com/annapmeyer/linear-bias-certification}.} To speed up the evaluation, we use a high-throughput computing cluster. (We request 8GB memory and 8GB disk, but all experiments are feasible to run on a standard laptop.)
This approach does not have a direct baseline with which to compare, as ours is the first work to propose and analyze the dataset-multiplicity problem.

\paragraph{Datasets and tasks}
We analyze our approach on three datasets: the Income prediction task from the FolkTables project \cite{folktables}, the Loan Application Register (LAR) from the Home Mortgage Disclosure Act publication materials \cite{hdma}, and MNIST 1/7 (i.e., the MNIST dataset limited to 1's and 7's) \cite{mnist}.  We divide each dataset into train (80\%), test (10\%), and validation (10\%) datasets and repeat all experiments across 10 folds, except when a standard train/test split is provided, as with MNIST.  %
We perform classification on the Income dataset (whether or not an individual earned over \$\num{50000}), on LAR (whether or not a home mortgage loan was approved), and on MNIST (binary classification limited to 1's and 7's). Additionally, in the appendix we evaluate the regression version of Income by predicting an individual's exact income. 
For all of the Income experiments, we limit the dataset to only include data from a single U.S. state to speed computations. 
In \Cref{sec:results_exact,sec:res_approx} we present results from a single state, Wisconsin, while in \Cref{sec:results_demo} we compare results across five different US states.

\paragraph{Accuracy-Robustness Tradeoff} 
There is a tradeoff between accuracy and robustness to dataset multiplicity that is controlled by the regularization parameter $\lambda$ in the ridge regression formula $\theta = (\x^\top \x- \lambda I)^{-1}\x^\top\y$.  Larger values of $\lambda$ improve robustness at the expense of accuracy. \Cref{fig:tolerance} illustrates this tradeoff. All results below, unless otherwise stated, use a value of $\lambda$ that maximizes accuracy. %

\input{graphs/tols}

\paragraph{Experiment goals}
Our core objective is to see how robust linear models are to dataset label multiplicity. We measure this sensitivity with the \emph{robustness rate}, that is, the fraction of test points that receive invariant predictions (within a radius of $\rr$) given a certain level of label inaccuracies. The robustness rate is a proxy for the stability of a modeling process under dataset multiplicity, so knowing this rate --- and comparing it across various datasets, demographic groups, and algorithms --- can help ML practitioners analyze the trustworthiness of their models' outputs. In \Cref{sec:results_exact}, we describe the overall robustness results for each dataset. Then, in \Cref{sec:results_demo}, we perform a stratified analysis across demographic groups and show how varying the dataset multiplicity model definition can significantly change data's vulnerability to dataset multiplicity. Finally, in \Cref{sec:res_approx}, we discuss results of the over-approximate approach and how it can be used to evaluate dataset multiplicity robustness.

\subsection{Robustness to Dataset Multiplicity}\label{sec:results_exact}

\begin{tcolorbox}[boxsep=1pt,left=3pt,right=3pt,top=2pt,bottom=2pt]
\textbf{Key insights:}  When a small percentage (e.g., 1\%) of labels are incorrect, a significant minority of test samples are not robust to dataset multiplicity, raising questions about the reliability of the models' predictions.
\end{tcolorbox}

\Cref{tab:all_data} shows the fraction of test points that are dataset multiplicity robust for classification datasets at various levels of label inaccuracies. For each dataset, the robustness rates are relatively high ($>80\%$) when fewer than 0.25\% of the labels can be modified, and stay above 50\% for 1\% label error.

\begin{table}[t]
\small
\centering
\caption{Robustness rates (percentage of test dataset whose predicted label cannot change under dataset label multiplicity) for classification datasets given different rates of inaccurate labels.}
\label{tab:all_data}
\begin{tabular}{l|rrrrrrrrrrr}
\toprule
\multirow{2}{*}{Dataset} & \multicolumn{11}{c}{Inaccurate labels as a percentage of training dataset size}  \\
 & 0.1\%  & 0.25\%  & 0.5\%  & 0.75\%  & 1.0\%  & 1.5\% & 2.0\%  & 3.0\%  & 4.0\% & 5.0\% & 6.0\% \\\midrule
LAR  & 93.9 & 88.1 & 79.4 & 69.2 & 61.3 & 45.9 & 33.7 & 16.2 & 5.2 & 0.4 & 0.0 \\
Income & 91.1 & 81.4 & 67.8 & 58.4 & 50.7 & 37.2 & 23.3 & 12.1  & 4.8 & 1.7 & 0.7  \\
MNIST 1/7 & 98.3 & 96.3 & 93.1 & 88.3 & 84.4 & 73.1 & 60.8 & 38.8 & 23.3 & 13.1 & 7.0 \\
\bottomrule   
\end{tabular}
\end{table}

Despite globally high robustness rates, we must also consider the non-robust data points. In particular, we want to emphasize that for Income and LAR, each non-robust point represents an individual whose classification hinges on the labels of only a small number of training samples. That is, given the assumed uncertainty about the labels' accuracy, it is plausible that a `clean' dataset would output different test-time predictions for these samples. Some data points will almost surely fall into this category --- if not, that would mean the model was independent from the training data, which is not our goal! However, if a sample is not robust to a small number of label modifications, perhaps the model should not be deployed on that sample. Instead, if the domain is high-impact, the sample could be evaluated by a human or auxiliary model (see \Cref{sec:implications} for more discussions on how to handle non-robust test samples). 

Returning to \Cref{tab:all_data}, many data points are not robust at low label error rates, e.g., when 1\% of labels may be wrong, 49.3\% of Income test samples are not robust. Likewise, $38.7\%$ of LAR test samples can receive the opposite classification if the correct subset of 1\% of labels change. These low robustness rates call into question the advisability of using linear classifiers on these datasets unless one is confident that label accuracy is very high.

%



\subsection{Disparate Impacts of Dataset Multiplicity}\label{sec:results_demo}

In \Cref{sec:results_exact}, we showed dataset multiplicity robustness results given the assumption that all labels in the training dataset were potentially inaccurate. However, in practice, label errors may be systemic. In particular, two of the datasets we analyzed in \Cref{sec:results_exact} contain data that may display racial or gender bias. 
We hypothesize that the Income dataset likely reflects trends where women and people of color are underpaid relative to white men in the United States, and that the LAR dataset may similarly reflect racial and gender biases on the part of mortgage lending decision makers. To leverage this refined understanding of potential inaccuracies in the labels, in this section we evaluate test data robustness under the following two \emph{targeted} dataset multiplicity paradigms: \begin{itemize}
    \item \emph{`Promoting' the disadvantaged group}: We restrict label modification to members of the disadvantaged group (i.e., Black people or women); furthermore, we only change labels from the negative class to the positive class.
    \item \emph{`Demoting' the advantaged group}: We restrict label modification to the advantaged group (i.e., White people or men); furthermore, we only allow change labels from the positive class to the negative class. This setup is compatible with the worldview (for example) that men are overpaid. 
\end{itemize}

Before delving into the results, we want to acknowledge that this analysis has a few shortcomings. First, for simplicity we use binary gender (male/female) and racial (White/Black) breakdowns. Clearly, this dichotomy fails to capture complexities in both gender and racial identification and perceptions. Second, the targeted dataset multiplicity models that we use also over-simplify both how discrimination manifests and how it can interact with other identities not captured by the data. Finally, we are not social scientists or domain experts and it is possible that the folk wisdom we rely on to propose data biases does not fully capture the patterns in the world. Rather, readers should treat this section as an analysis of `toy phenomena' meant to illustrate how our technique can be used for real-world tasks.

\paragraph{Basic results}
First, we present the overall dataset multiplicity robustness rates for the various multiplicity definitions. 
\begin{tcolorbox}[boxsep=1pt,left=3pt,right=3pt,top=2pt,bottom=2pt]\textbf{Key insights:} Limiting label errors to a subset of the training dataset (i.e., refining $\Bias$) yields higher dataset multiplicity robustness rates. However, the exact choice of $\Bias$ is significant.
\end{tcolorbox}
\Cref{fig:demo_rob} shows that in all cases the targeted multiplicity definition yields significantly higher overall robustness rates than a broad multiplicity definition does. Notably, limiting all label perturbations to one racial group for Income greatly affects robustness: using the original multiplicity definition (no targeting), no test samples are robust when 12\% of labels can be modified. However, when limiting label errors to Black people with the negative label, the robustness rate remains over 95\% --- it turns out this is not surprising, since fewer than 2\% of the data points have race=Black. However, more than 80\% of samples have race=White, and limiting label changes to White people with the positive label still yields over 60\% robustness when 12\% of labels can be changed. 
Similarly, using targeted dataset multiplicity definitions for LAR can increase overall robustness by up to 30 percentage points.

\input{graphs/demo_robustness.tex}

\paragraph{Demographic group robustness rates}
We also investigate the robustness rates for different demographic groups, both under the original, untargeted multiplicity assumptions and under the targeted versions. 
\begin{tcolorbox}[boxsep=1pt,left=3pt,right=3pt,top=2pt,bottom=2pt]
\textbf{Key insights:}  Different demographic groups do not exhibit the same dataset multiplicity robustness rates and targeting $\Bias$ to reflect real-world uncertainty can exacerbate discrepancies.
\end{tcolorbox}
Each row of \Cref{fig:statestargeting} compares baseline (untargeted) robustness rates, stratified by demographic groups, with targeted versions of $\Bias$ for five US states. We observe two trends: first, there are commonly racial and gender discrepancies (see the pairs of dotted lines). E.g., for all states except Wisconsin, men consistently have higher baseline robustness rates than women (sometimes by a margin of over 20\%). 
Second, using various targeted versions of $\Bias$ has unequal impacts across demographic groups.
The top row of \Cref{fig:statestargeting} shows that targeting on race=Black (i.e., allowable label perturbations can change Black people's labels from $-1$ to $1$) modestly improves dataset multiplicity robustness rates for Black people, but massively improves them for White people. We see similar trends, namely, that the non-targeted group sees higher robustness rate gains than the targeted group, across the other versions of $\Bias$, as well.
\input{graphs/income_states_targeted.tex}
On LAR, similar results hold. (Graphs and discussion are in the appendix.)

\subsection{Approximate Approach}\label{sec:res_approx}
\begin{tcolorbox}[boxsep=1pt,left=3pt,right=3pt,top=2pt,bottom=2pt]
\textbf{Key insights:}  Using the approximate approach greatly reduces precision in proving dataset-multiplicity robustness, but still shows promise for understanding dataset multiplicity's impact given low levels of label errors.  
\end{tcolorbox}
As expected, the approximate approach from \Cref{sec:approx} is less precise than the exact one. The loss in precision depends highly on the dataset and the level of label uncertainty, as shown in \Cref{fig:exactapproxcompare}.
For Income and MNIST, there are very large gaps: for example, given 2\% label error, 80\% of test samples are robust to dataset multiplicity, but the approximate version cannot prove robustness for any samples. However, there are some bright points: at 1\% label error, we can still prove robustness for 90\% of MNIST-1/7 samples, and over 60\% of Income samples. In situations where label error is expected to be relatively small, the approximate approach can still be useful for gauging the relative dataset multiplicity robustness of a dataset.


\input{graphs/exactapproxcompare}

We also measured the time complexity of each approach.  To check the robustness of \num{1000} Income samples, it takes 37.2 seconds for the exact approach and 6.8 seconds for the approximate approach. For \num{10000} samples, it takes 383.5 seconds and 30.7 seconds, respectively. I.e., the exact approach scales linearly with the number of samples, but the approximate approach stays within a single order of magnitude. See appendix for more details and discussion.

\section{Implications and Ethical Challenges of Dataset Multiplicity}\label{sec:implications}
For the conclusions we draw from machine learning to be robust and generalizable,
we need to 
\emph{understand} dataset multiplicity, \emph{reduce} its impacts on predictions, and \emph{adapt} machine learning practices to consider dataset multiplicity.

\paragraph{Understanding dataset multiplicity}
Adopting standardized data documentation practices will likely aid in identifying potential inaccuracies or biases in datasets~\cite{datasheets,datacards}. Further work surrounding \emph{how} and \emph{why} datasets are created with particular worldviews (e.g.,~\cite{hutchinsonAccountability,scheuermanPolitics}) will assist in identifying blind spots in existing datasets and help further the push for more robust dataset curation and documentation. 
But even if specific shortcomings in data collection are addressed through better curation and documentation practices, unavoidable variations in data collection will still contribute to dataset multiplicity~\cite{rechtImagenet}, making modeling and model deployment interventions important, too.

\paragraph{Reducing the impacts of dataset multiplicity}
An advantage of predictive multiplicity (i.e., multiplicity in the model selection process given a fixed training dataset) is that the wide range of equally-accurate models allows model developers to choose a model based on criteria like fairness or robustness without sacrificing predictive accuracy~\cite{blackMultiplicity, xin2022exploring}.  
Likewise, we know that there exist datasets --- typically de-biased or more representative than a na\"ively-collected baseline dataset --- that yield models that are both fair and accurate~\cite{chenWhy,duttaTradeoff,wickFairness}. 
If there exists a dataset in the dataset multiplicity set that yields a fairer (or more robust, more interpretable, etc.) model, then we should consider whether it is more appropriate to use that dataset to train the deployed model. 
(This may or may not be appropriate, depending on the domain, and is a decision that should be considered in conjunction with stakeholders.)
Another option is to find learning algorithms or model classes that are inherently more robust to dataset multiplicity. 
In the context of ridge regression, we found that using a larger regularization parameter (see \Cref{fig:tolerance}) increases dataset-multiplicity robustness. 
Ensemble learning is another promising direction, as it has been shown to decrease predictive multiplicity~\cite{black2022selective}. 

\paragraph{Adapting ML to handle non-robustness to dataset multiplicity}
If a model has low dataset multiplicity robustness in aggregate across a test dataset, then confidence may be too low to deploy the model because of procedural fairness concerns~\cite{blackMultiplicity}.   
It is also important to consider how robustness to dataset multiplicity varies across different population subgroups. 
As we saw in \Cref{sec:results_demo}, different multiplicity definitions can yield disparate multiplicity-robustness rates across populations.
The approximate approach from \Cref{sec:approx} is well-suited to these aggregate analyses.
If we find that dataset multiplicity rates are low (either overall or for some demographic groups), it may be more appropriate to train with a different algorithm, refine the training dataset so that multiplicity is lessened, or avoid machine learning for the task at hand.

Dataset multiplicity robustness should also be considered on an individual level, as in the exact approach from \Cref{sec:exact}.
If a given dataset and algorithm are not dataset multiplicity-robust on a test sample $\xtest$, options include abstaining on $\xtest$ or using the most favorable outcome in its dataset multiplicity range. 
But in many cases, an algorithm is deployed to allocate a finite resource --- thus, returning the best-case label for all samples is likely infeasible. Instead, the chosen model is a function of the arbitrary nature of the provided dataset. 
Creel and Hellman explain that arbitrary models are not necessarily cause for concern, however,  algorithmic monoculture becomes a concern when the same arbitrary outcomes are used widely, thereby broadly excluding otherwise-qualified people from opportunities~\cite{creel_hellman_2022, algorithmicMonoculture}. 
Given machine learning's reliance on benchmark datasets, it seems plausible that there is algorithmic monoculture stemming from the choice of arbitrary training dataset. 
To avoid algorithmic monoculture effects in the modeling process, scholars have proposed randomizing over model selection or outcomes~\cite{grgichlacaFairness,algorithmicMonoculture} --- we suspect that a similar approach would make sense in the context of dataset multiplicity. 

\section{Related Work}\label{sec:related}

Dataset multiplicity robustness can be used either to \emph{certify} (i.e., prove) that individual predictions are stable given uncertainty in the training data, or to characterize the overall stability of a model. 
Other approaches in the realms of  adversarial ML, uncertainty quantification, and learning theory aim to answer similar questions. 

\paragraph{Comparison with other sources of multiplicity}
Predictive multiplicity and underspecification show that there are often many models that fit a given dataset equally well~\cite{breiman-stability, amour-underspecification, predictive-multiplicity,teneyUnderspecification}. 
Because of this multiplicity, models can often be selected to simultaneously achieve accuracy and also fairness, robustness, or other desirable model-level properties~\cite{blackMultiplicity,costonFairness,semenovaExistence,teneyUnderspecification,xin2022exploring}. 
The extent of predictive multiplicity can be lessened by constructing more sophisticated models (e.g.,~\cite{roth2022reconciling}), however, this type of intervention only reduces algorithmic multiplicity and, furthermore, does not address the underlying procedural fairness concern that individuals can justifiably receive different decisions.
Multiplicity also arises when modifying training parameters like random seed, data ordering, and hyperparameters~\cite{bouthillier,cooperHyperparameter,mehrer2020individual,shumailovSGD,snoek2012practical}, but
most of the works on this topic focus on either attack vulnerability or the reproducibility and generalizability of the training process. 
In a notable exception, Bell et al. explore the `multiverse' of models by characterizing what hyperparameter values correspond with what conclusions~\cite{bell2022multiplicity}, but their analysis does not account for uncertainty in the training dataset, nor does the predictive multiplicity literature. 
There is, however, a line of work that aims to increase the fairness of a model by debiasing or augmenting a dataset~\cite{chenWhy,duttaTradeoff,wickFairness}. 
Our dataset multiplicity framework is more broad than those approaches since we aim to understand the entire range of feasible datasets and models, rather than identify a single fair alternative.

\paragraph{Other approaches to bounding uncertainty} Approaches from causal inference, uncertainty quantification, and learning theory aim to measure and reduce uncertainty in machine learning. 
One major concern in this realm is the role that researcher decisions can play in reproducibility~\cite{coker2021theory,simmonsPsychology}. 
Coker et al. propose `hacking intervals' to capture the range of outcomes that any realistic researcher could obtain through different analysis choices or datasets~\cite{coker2021theory}. 
Our dataset multiplicity framework can be viewed as extending their prescriptively constrained hacking-interval concept to allow for arbitrarily-defined changes to the training dataset. 
However, their results employ causality to make a stronger case for defining reasonable dataset modifications. 
Likewise, partial identification in economics uses domain knowledge and statistical tools to bound the range of possible outcomes in a data analysis~\cite{tamer2010partial}.

The methods described above --- `hacking intervals' and partial identification --- are special formulations of uncertainty quantification (UQ), which aims to understand the range of predictions that a model may output. 
UQ can occur either through Bayesian methods that treat model weights as random variables, or through ensembling or bootstrapping~\cite{sullivan2015introduction}. 
While UQ shares a common goal with dataset multiplicity --- i.e., understanding the range of outcomes --- the assumptions about where the multiplicity arises are different. 
UQ typically assumes that uncertainty stems from either insufficient data or noisy data, and does not account for the systemic errors that dataset multiplicity can encompass. 
Work on selection bias aims to learn in the presence of missing data, feature, or labels.  
For example, multiple imputation fills the missing data in multiple ways and aggregates the results~\cite{rubinImputation,van2018flexible}, similar to how dataset multiplicity considers all alternative models. 
The main difference between multiple imputation as a selection bias intervention and dataset multiplicity is that given selection bias, it is easy to identify 
where the inaccuracies are, and multiple imputation only considers a small number of dataset options, rather than all options as dataset multiplicity aims to do.

Within learning theory, distributional robustness studies how to find models that perform well across a family of distributions~\cite{ben-tal-distribution, namkoong-distribution, shafieezadeh-distribution}, robust statistics shows how algorithms can be adapted to account for outliers or other errors in the data~\cite{diakonikolas2019recent, diakonikolas_robustness}, and various works focus on robustifying training algorithms to label noise~\cite{natarajan, patrini, rolnick}. 
However, these works all (a) provide \textit{statistical} global robustness guarantees, rather than the provable \textit{exact} robustness guarantees that we make, (b) try to find a single good classifier, rather than understand the range of possible outcomes, and (c) typically require strong assumptions about the data distribution and the types of noise or errors.  
Algorithmic stability \cite{breiman-stability, devroye-stability} and sensitivity analysis \cite{hadi2009sensitivity} aim to quantify how sensitive algorithms are to small perturbations in the training data. 
However, they both typically make strict assumptions about the perturbation's form, either as a leave-one-out perturbation model in algorithmic stability~\cite{black-leaveoneout, kearns-stability}, or as random noise in sensitivity analysis. 

\paragraph{Robustness in adversarial ML}
Checking robustness to dataset multiplicity has parallels to adversarial machine learning, especially data poisoning, where an attacker modifies a small portion of the training dataset to reduce test-time accuracy~\cite{biggio2012poisoning, shafahi2018poison, zhang-label-flipping, xiao-label-flipping}.
Various defenses counteract these attacks~\cite{jia-bagging, paudice-label-sanitation,rosenfeld-randomized, steinhardt-poisoning,zhang-debugging}, including ones that focus on attacking and defending linear regression models~\cite{jagielski-poisoning, muller-poisoning}. 
Some of these works (e.g., \cite{rosenfeld-randomized} for label-flipping) additionally try to \emph{certify} robustness. 
Our exact solution to dataset multiplicity in linear models with label errors functions as a certificate, since we prove robustness to all allowable label perturbations, including adversarial ones. 
Three major differences from Rosenfeld et al. \cite{rosenfeld-randomized} are that we do not modify the underlying algorithm to achieve a certificate, the certification process is deterministic, not probabilistic, and we allow the label perturbations to be targeted towards a particular subgroup. 
Meyer et al. use a similar targeted view on data modifications,
but their approach is strictly overapproximate and is limited to decision trees \cite{meyer}. 
Furthermore, our definition of dataset multiplicity is distinct from the \emph{defense} papers in that we aim to study dataset multiplicity robustness of existing algorithms; 
however, an interesting direction for future work would be to improve dataset multiplicity robustness via algorithmic modifications.

\section{Conclusions}
We defined the dataset multiplicity problem, showed how to evaluate the impacts of dataset multiplicity on linear models in the presence of label noise, and presented thoughts for how dataset multiplicity should be considered as part of the machine learning pipeline. Notably, we find that we can certify robustness to dataset multiplicity for some test samples, indicating that we can deploy these predictions with confidence. By contrast, we show that other test samples are not robust to low levels of dataset multiplicity, meaning that unless labels are very accurate, these test samples may receive predictions that are artifacts of the random nature of data collection, rather than real-world patterns.

Future work in the area of dataset multiplicity abounds, and many connections with other areas are mentioned throughout \Cref{sec:implications,sec:related}. The most important technical direction for future exploration, in our opinion, is extending the dataset multiplicity framework to probabilistic settings, e.g., by asking what proportion of reasonable datasets yield a different prediction for a given test sample. This inquiry is likely to be more fruitful than finding exact solutions for more complicated model classes, and it may open opportunities to leverage techniques from areas like distributional robustness or uncertainty quantification. 
Making direct connections with areas like causal inference and partial identification in economics should also be a priority for future work, as these topics have similar goals and have been studied more broadly.
In a social-science realm, dataset multiplicity could benefit from more work on  what features and labels in a dataset are most likely to be inaccurate or affected by social biases, since this will allow us to bound dataset multiplicity more precisely. Similarly, it is difficult to separate out instances of direct bias (e.g., salary disparities due to gender discrimination) and indirect bias (e.g., salary disparities due to women feeling unwelcome in STEM careers), and more research is needed into how that distinction should affect dataset multiplicity definitions.

\bibliographystyle{abbrv}
\bibliography{refs}

\begin{thebibliography}{10}

\bibitem{bell2022multiplicity}
S.~Bell, O.~Kampman, J.~Dodge, and N.~D. Lawrence.
\newblock Modeling the {M}achine {L}earning {M}ultiverse.
\newblock In A.~H. Oh, A.~Agarwal, D.~Belgrave, and K.~Cho, editors, {\em
  Advances in Neural Information Processing Systems}, 2022.

\bibitem{ben-tal-distribution}
A.~Ben-Tal, D.~den Hertog, A.~De~Waegenaere, B.~Melenberg, and G.~Rennen.
\newblock Robust {S}olutions of {O}ptimization {P}roblems {A}ffected by
  {U}ncertain {P}robabilities.
\newblock {\em Management Science}, 59(2):341--357, 2013.

\bibitem{biggio2012poisoning}
B.~Biggio, B.~Nelson, and P.~Laskov.
\newblock Poisoning {A}ttacks against {S}upport {V}ector {M}achines.
\newblock In {\em Proceedings of the 29th International Conference on Machine
  Learning}, ICML'12, page 1467–1474. Omnipress, 2012.

\bibitem{black-leaveoneout}
E.~Black and M.~Fredrikson.
\newblock Leave-{O}ne-out {U}nfairness.
\newblock In {\em Proceedings of the 2021 ACM Conference on Fairness,
  Accountability, and Transparency}, FAccT '21, page 285–295. Association for
  Computing Machinery, 2021.

\bibitem{black2022selective}
E.~Black, K.~Leino, and M.~Fredrikson.
\newblock Selective {E}nsembles for {C}onsistent {P}redictions.
\newblock In {\em International Conference on Learning Representations}, 2022.

\bibitem{blackMultiplicity}
E.~Black, M.~Raghavan, and S.~Barocas.
\newblock Model {M}ultiplicity: {O}pportunities, {C}oncerns, and {S}olutions.
\newblock In {\em 2022 ACM Conference on Fairness, Accountability, and
  Transparency}, FAccT '22, page 850–863, New York, NY, USA, 2022.
  Association for Computing Machinery.

\bibitem{blauWageGap}
F.~D. Blau and L.~M. Kahn.
\newblock The {G}ender {W}age {G}ap: {E}xtent, {T}rends, and {E}xplanations.
\newblock {\em Journal of Economic Literature}, 55(3):789--865, September 2017.

\bibitem{bouthillier}
X.~Bouthillier, P.~Delaunay, M.~Bronzi, A.~Trofimov, B.~Nichyporuk, J.~Szeto,
  N.~Mohammadi~Sepahvand, E.~Raff, K.~Madan, V.~Voleti, S.~Ebrahimi~Kahou,
  V.~Michalski, T.~Arbel, C.~Pal, G.~Varoquaux, and P.~Vincent.
\newblock Accounting for {V}ariance in {M}achine {L}earning {B}enchmarks.
\newblock In A.~Smola, A.~Dimakis, and I.~Stoica, editors, {\em Proceedings of
  Machine Learning and Systems}, volume~3, pages 747--769, 2021.

\bibitem{breiman-stability}
L.~Breiman.
\newblock {Heuristics of Instability and Stabilization in Model Selection}.
\newblock {\em The Annals of Statistics}, 24(6):2350 -- 2383, 1996.

\bibitem{genderShades}
J.~Buolamwini and T.~Gebru.
\newblock Gender {S}hades: {I}ntersectional {A}ccuracy {D}isparities in
  {C}ommercial {G}ender {C}lassification.
\newblock In S.~A. Friedler and C.~Wilson, editors, {\em Proceedings of the 1st
  Conference on Fairness, Accountability and Transparency}, volume~81 of {\em
  Proceedings of Machine Learning Research}, pages 77--91. PMLR, 23--24 Feb
  2018.

\bibitem{chenWhy}
I.~Y. Chen, F.~D. Johansson, and D.~Sontag.
\newblock Why is {M}y {C}lassifier {D}iscriminatory?
\newblock In {\em Proceedings of the 32nd International Conference on Neural
  Information Processing Systems}, NIPS'18, page 3543–3554, Red Hook, NY,
  USA, 2018. Curran Associates Inc.

\bibitem{coker2021theory}
B.~Coker, C.~Rudin, and G.~King.
\newblock A {T}heory of {S}tatistical {I}nference for {E}nsuring the
  {R}obustness of {S}cientific {R}esults.
\newblock {\em Management Science}, 67(10):6174--6197, 2021.

\bibitem{cooperHyperparameter}
A.~F. Cooper, Y.~Lu, J.~Forde, and C.~M. De~Sa.
\newblock Hyperparameter {O}ptimization {I}s {D}eceiving {U}s, and {H}ow to
  {S}top {I}t.
\newblock In M.~Ranzato, A.~Beygelzimer, Y.~Dauphin, P.~Liang, and J.~W.
  Vaughan, editors, {\em Advances in Neural Information Processing Systems},
  volume~34, pages 3081--3095. Curran Associates, Inc., 2021.

\bibitem{costonFairness}
A.~Coston, A.~Rambachan, and A.~Chouldechova.
\newblock Characterizing {F}airness {O}ver the {S}et of {G}ood {M}odels {U}nder
  {S}elective {L}abels.
\newblock In M.~Meila and T.~Zhang, editors, {\em Proceedings of the 38th
  International Conference on Machine Learning}, volume 139 of {\em Proceedings
  of Machine Learning Research}, pages 2144--2155. PMLR, 18--24 Jul 2021.

\bibitem{creel_hellman_2022}
K.~Creel and D.~Hellman.
\newblock The {A}lgorithmic {L}eviathan: {A}rbitrariness, {F}airness, and
  {O}pportunity in {A}lgorithmic {D}ecision-{M}aking {S}ystems.
\newblock {\em Canadian Journal of Philosophy}, 52(1):26–43, 2022.

\bibitem{amour-underspecification}
A.~D'Amour, K.~Heller, D.~Moldovan, B.~Adlam, B.~Alipanahi, A.~Beutel, C.~Chen,
  J.~Deaton, J.~Eisenstein, M.~D. Hoffman, F.~Hormozdiari, N.~Houlsby, S.~Hou,
  G.~Jerfel, A.~Karthikesalingam, M.~Lucic, Y.~Ma, C.~McLean, D.~Mincu,
  A.~Mitani, A.~Montanari, Z.~Nado, V.~Natarajan, C.~Nielson, T.~F. Osborne,
  R.~Raman, K.~Ramasamy, R.~Sayres, J.~Schrouff, M.~Seneviratne, S.~Sequeira,
  H.~Suresh, V.~Veitch, M.~Vladymyrov, X.~Wang, K.~Webster, S.~Yadlowsky,
  T.~Yun, X.~Zhai, and D.~Sculley.
\newblock Underspecification {P}resents {C}hallenges for {C}redibility in
  {M}odern {M}achine {L}earning, 2020.

\bibitem{devroye-stability}
L.~Devroye and T.~Wagner.
\newblock Distribution-{F}ree {P}erformance {B}ounds for {P}otential {F}unction
  {R}ules.
\newblock {\em IEEE Transactions on Information Theory}, 25(5):601--604, 1979.

\bibitem{diakonikolas_robustness}
I.~Diakonikolas, G.~Kamath, D.~M. Kane, J.~Li, A.~Moitra, and A.~Stewart.
\newblock Robustness {M}eets {A}lgorithms.
\newblock {\em Commun. {ACM}}, 64(5):107--115, 2021.

\bibitem{diakonikolas2019recent}
I.~Diakonikolas and D.~M. Kane.
\newblock Recent {A}dvances in {A}lgorithmic {H}igh-{D}imensional {R}obust
  {S}tatistics, 2019.

\bibitem{folktables}
F.~Ding, M.~Hardt, J.~Miller, and L.~Schmidt.
\newblock Retiring {A}dult: {N}ew {D}atasets for {F}air {M}achine {L}earning.
\newblock {\em arXiv preprint arXiv:2108.04884}, 2021.

\bibitem{duttaTradeoff}
S.~Dutta, D.~Wei, H.~Yueksel, P.-Y. Chen, S.~Liu, and K.~R. Varshney.
\newblock Is {T}here a {T}rade-off between {F}airness and {A}ccuracy? {A}
  {P}erspective {U}sing {M}ismatched {H}ypothesis {T}esting.
\newblock In {\em Proceedings of the 37th International Conference on Machine
  Learning}, ICML'20. JMLR.org, 2020.

\bibitem{hdma}
{Federal Financial Institutions Examination Council}.
\newblock One {Y}ear {N}ational {L}oan-{L}evel {D}ataset, 2019.
\newblock Accessed 5 Jan. 2023.

\bibitem{datasheets}
T.~Gebru, J.~Morgenstern, B.~Vecchione, J.~W. Vaughan, H.~Wallach, H.~D. III,
  and K.~Crawford.
\newblock Datasheets for {D}atasets.
\newblock {\em Commun. ACM}, 64(12):86–92, nov 2021.

\bibitem{grgichlacaFairness}
N.~Grgić-Hlača, M.~B. Zafar, K.~P. Gummadi, and A.~Weller.
\newblock On {F}airness, {D}iversity and {R}andomness in {A}lgorithmic
  {D}ecision {M}aking, 2017.

\bibitem{hadi2009sensitivity}
A.~S. Hadi and S.~Chatterjee.
\newblock {\em Sensitivity {A}nalysis in {L}inear {R}egression}.
\newblock John Wiley \& Sons, 2009.

\bibitem{hutchinsonAccountability}
B.~Hutchinson, A.~Smart, A.~Hanna, E.~Denton, C.~Greer, O.~Kjartansson,
  P.~Barnes, and M.~Mitchell.
\newblock Towards {A}ccountability for {M}achine {L}earning {D}atasets:
  {P}ractices from {S}oftware {E}ngineering and {I}nfrastructure.
\newblock In {\em Proceedings of the 2021 ACM Conference on Fairness,
  Accountability, and Transparency}, FAccT '21, page 560–575, New York, NY,
  USA, 2021. Association for Computing Machinery.

\bibitem{jacobsMeasurement}
A.~Z. Jacobs and H.~Wallach.
\newblock Measurement and {F}airness.
\newblock In {\em Proceedings of the 2021 ACM Conference on Fairness,
  Accountability, and Transparency}, FAccT '21, page 375–385, New York, NY,
  USA, 2021. Association for Computing Machinery.

\bibitem{jagielski-poisoning}
M.~Jagielski, A.~Oprea, B.~Biggio, C.~Liu, C.~Nita-Rotaru, and B.~Li.
\newblock Manipulating {M}achine {L}earning: {P}oisoning {A}ttacks and
  {C}ountermeasures for {R}egression {L}earning.
\newblock In {\em 2018 IEEE Symposium on Security and Privacy (SP)}, pages
  19--35, 2018.

\bibitem{jia-bagging}
J.~Jia, X.~Cao, and N.~Z. Gong.
\newblock Intrinsic {C}ertified {R}obustness of {B}agging against {D}ata
  {P}oisoning {A}ttacks.
\newblock {\em Proceedings of the AAAI Conference on Artificial Intelligence},
  35(9):7961--7969, May 2021.

\bibitem{kearns-stability}
M.~Kearns and D.~Ron.
\newblock Algorithmic {S}tability and {S}anity-{C}heck {B}ounds for
  {L}eave-{O}ne-{O}ut {C}ross-{V}alidation.
\newblock {\em Neural Computation}, 11(6):1427--1453, 1999.

\bibitem{algorithmicMonoculture}
J.~Kleinberg and M.~Raghavan.
\newblock Algorithmic {M}onoculture and {S}ocial {W}elfare.
\newblock {\em Proceedings of the National Academy of Sciences},
  118(22):e2018340118, 2021.

\bibitem{mnist}
Y.~LeCun, C.~Cortes, and C.~J.~C. Burges.
\newblock The {MNIST} {D}atabase of {H}andwritten {D}igits, [n.d.].

\bibitem{predictive-multiplicity}
C.~Marx, F.~Calmon, and B.~Ustun.
\newblock Predictive {M}ultiplicity in {C}lassification.
\newblock In H.~D. III and A.~Singh, editors, {\em Proceedings of the 37th
  International Conference on Machine Learning}, volume 119 of {\em Proceedings
  of Machine Learning Research}, pages 6765--6774. PMLR, 7 2020.

\bibitem{mehrer2020individual}
J.~Mehrer, C.~J. Spoerer, N.~Kriegeskorte, and T.~C. Kietzmann.
\newblock Individual {D}ifferences among {D}eep {N}eural {N}etwork {M}odels.
\newblock {\em Nature Communications}, 11, 2020.

\bibitem{meyer}
A.~P. Meyer, A.~Albarghouthi, and L.~D\textquotesingle~Antoni.
\newblock Certifying {R}obustness to {P}rogrammable {D}ata {B}ias in {D}ecision
  {T}rees.
\newblock In M.~Ranzato, A.~Beygelzimer, Y.~Dauphin, P.~Liang, and J.~W.
  Vaughan, editors, {\em Advances in Neural Information Processing Systems},
  volume~34, pages 26276--26288. Curran Associates, Inc., 2021.

\bibitem{muller-poisoning}
N.~Müller, D.~Kowatsch, and K.~Böttinger.
\newblock Data {P}oisoning {A}ttacks on {R}egression {L}earning and
  {C}orresponding {D}efenses.
\newblock In {\em 2020 IEEE 25th Pacific Rim International Symposium on
  Dependable Computing (PRDC)}, pages 80--89, 2020.

\bibitem{namkoong-distribution}
H.~Namkoong and J.~C. Duchi.
\newblock Stochastic {G}radient {M}ethods for {D}istributionally {R}obust
  {O}ptimization with f-divergences.
\newblock In D.~Lee, M.~Sugiyama, U.~Luxburg, I.~Guyon, and R.~Garnett,
  editors, {\em Advances in Neural Information Processing Systems}, volume~29.
  Curran Associates, Inc., 2016.

\bibitem{natarajan}
N.~Natarajan, I.~S. Dhillon, P.~K. Ravikumar, and A.~Tewari.
\newblock Learning with {N}oisy {L}abels.
\newblock In C.~Burges, L.~Bottou, M.~Welling, Z.~Ghahramani, and
  K.~Weinberger, editors, {\em Advances in Neural Information Processing
  Systems}, volume~26. Curran Associates, Inc., 2013.

\bibitem{northcutt2021pervasive}
C.~G. Northcutt, A.~Athalye, and J.~Mueller.
\newblock Pervasive {L}abel {E}rrors in {T}est {S}ets {D}estabilize {M}achine
  {L}earning {B}enchmarks.
\newblock In {\em Thirty-fifth Conference on Neural Information Processing
  Systems Datasets and Benchmarks Track (Round 1)}, 2021.

\bibitem{patrini}
G.~Patrini, A.~Rozza, A.~K. Menon, R.~Nock, and L.~Qu.
\newblock Making {D}eep {N}eural {N}etworks {R}obust to label {N}oise: {A}
  {L}oss {C}orrection {A}pproach.
\newblock In {\em 2017 IEEE Conference on Computer Vision and Pattern
  Recognition (CVPR)}, pages 2233--2241, 2017.

\bibitem{paudice-label-sanitation}
A.~Paudice, L.~Mu{\~{n}}oz-Gonz{\'a}lez, and E.~C. Lupu.
\newblock Label {S}anitization {A}gainst {L}abel {F}lipping {P}oisoning
  {A}ttacks.
\newblock In C.~Alzate, A.~Monreale, H.~Assem, A.~Bifet, T.~S. Buda,
  B.~Caglayan, B.~Drury, E.~Garc{\'i}a-Mart{\'i}n, R.~Gavald{\`a},
  I.~Koprinska, S.~Kramer, N.~Lavesson, M.~Madden, I.~Molloy, M.-I. Nicolae,
  and M.~Sinn, editors, {\em ECML PKDD 2018 Workshops}, pages 5--15, Cham,
  2019. Springer International Publishing.

\bibitem{paullada}
A.~Paullada, I.~D. Raji, E.~M. Bender, E.~Denton, and A.~Hanna.
\newblock Data and its ({D}is)contents: {A} {S}urvey of {D}ataset {D}evelopment
  and {U}se in {M}achine {L}earning {R}esearch.
\newblock {\em Patterns}, 2, 11 2021.

\bibitem{datacards}
M.~Pushkarna, A.~Zaldivar, and O.~Kjartansson.
\newblock Data {C}ards: {P}urposeful and {T}ransparent {D}ataset
  {D}ocumentation for {R}esponsible {AI}.
\newblock In {\em 2022 ACM Conference on Fairness, Accountability, and
  Transparency}, FAccT '22, page 1776–1826, New York, NY, USA, 2022.
  Association for Computing Machinery.

\bibitem{raji2021ai}
I.~D. Raji, E.~Denton, E.~M. Bender, A.~Hanna, and A.~Paullada.
\newblock {AI} and the {E}verything in the {W}hole {W}ide {W}orld {B}enchmark.
\newblock In {\em Thirty-fifth Conference on Neural Information Processing
  Systems Datasets and Benchmarks Track (Round 2)}, 2021.

\bibitem{rechtImagenet}
B.~Recht, R.~Roelofs, L.~Schmidt, and V.~Shankar.
\newblock Do {I}mage{N}et {C}lassifiers {G}eneralize to {I}mage{N}et?
\newblock In K.~Chaudhuri and R.~Salakhutdinov, editors, {\em Proceedings of
  the 36th International Conference on Machine Learning}, volume~97 of {\em
  Proceedings of Machine Learning Research}, pages 5389--5400. PMLR, 09--15 Jun
  2019.

\bibitem{rolnick}
D.~Rolnick, A.~Veit, S.~J. Belongie, and N.~Shavit.
\newblock Deep {L}earning is {R}obust to {M}assive {L}abel {N}oise.
\newblock {\em CoRR}, abs/1705.10694, 2017.

\bibitem{rosenfeld-randomized}
E.~Rosenfeld, E.~Winston, P.~Ravikumar, and Z.~Kolter.
\newblock Certified {R}obustness to {L}abel-{F}lipping {A}ttacks via
  {R}andomized {S}moothing.
\newblock In H.~D. III and A.~Singh, editors, {\em Proceedings of the 37th
  International Conference on Machine Learning}, volume 119 of {\em Proceedings
  of Machine Learning Research}, pages 8230--8241. PMLR, 7 2020.

\bibitem{roth2022reconciling}
A.~Roth, A.~Tolbert, and S.~Weinstein.
\newblock Reconciling {I}ndividual {P}robability {F}orecasts.
\newblock {\em arXiv preprint arXiv:2209.01687}, 2022.

\bibitem{rubinImputation}
D.~B. Rubin.
\newblock {\em Multiple Imputation for Nonresponse in Surveys}.
\newblock John Wiley \& Sons, New York, 1987.

\bibitem{scheuermanPolitics}
M.~K. Scheuerman, A.~Hanna, and E.~Denton.
\newblock Do {D}atasets {H}ave {P}olitics? {D}isciplinary {V}alues in
  {C}omputer {V}ision {D}ataset {D}evelopment.
\newblock {\em Proc. ACM Hum.-Comput. Interact.}, 5, oct 2021.

\bibitem{semenovaExistence}
L.~Semenova, C.~Rudin, and R.~Parr.
\newblock On the {E}xistence of {S}impler {M}achine {L}earning {M}odels.
\newblock In {\em 2022 ACM Conference on Fairness, Accountability, and
  Transparency}, FAccT '22, page 1827–1858, New York, NY, USA, 2022.
  Association for Computing Machinery.

\bibitem{shafahi2018poison}
A.~Shafahi, W.~R. Huang, M.~Najibi, O.~Suciu, C.~Studer, T.~Dumitras, and
  T.~Goldstein.
\newblock Poison {F}rogs! {T}argeted {C}lean-{L}abel {P}oisoning {A}ttacks on
  {N}eural {N}etworks.
\newblock In {\em Proceedings of the 32nd International Conference on Neural
  Information Processing Systems}, NIPS'18, page 6106–6116. Curran Associates
  Inc., 2018.

\bibitem{shafieezadeh-distribution}
S.~Shafieezadeh-Abadeh, P.~M. Esfahani, and D.~Kuhn.
\newblock Distributionally {R}obust {L}ogistic {R}egression.
\newblock In {\em Proceedings of the 28th International Conference on Neural
  Information Processing Systems - Volume 1}, NIPS'15, page 1576–1584. MIT
  Press, 2015.

\bibitem{shumailovSGD}
I.~Shumailov, Z.~Shumaylov, D.~Kazhdan, Y.~Zhao, N.~Papernot, M.~A. Erdogdu,
  and R.~J. Anderson.
\newblock Manipulating {SGD} with {D}ata {O}rdering {A}ttacks.
\newblock In M.~Ranzato, A.~Beygelzimer, Y.~Dauphin, P.~Liang, and J.~W.
  Vaughan, editors, {\em Advances in Neural Information Processing Systems},
  volume~34, pages 18021--18032. Curran Associates, Inc., 2021.

\bibitem{simmonsPsychology}
J.~P. Simmons, L.~D. Nelson, and U.~Simonsohn.
\newblock False-{P}ositive {P}sychology: {U}ndisclosed {F}lexibility in {D}ata
  {C}ollection and {A}nalysis {A}llows {P}resenting {A}nything as
  {S}ignificant.
\newblock {\em Psychological Science}, 22(11):1359--1366, 2011.
\newblock PMID: 22006061.

\bibitem{snoek2012practical}
J.~Snoek, H.~Larochelle, and R.~P. Adams.
\newblock Practical {B}ayesian {O}ptimization of {M}achine {L}earning
  {A}lgorithms.
\newblock {\em Advances in Neural Information Processing Systems}, 25, 2012.

\bibitem{steinhardt-poisoning}
J.~Steinhardt, P.~W. Koh, and P.~Liang.
\newblock Certified {D}efenses for {D}ata {P}oisoning {A}ttacks.
\newblock In {\em Proceedings of the 31st International Conference on Neural
  Information Processing Systems}, NIPS'17, page 3520–3532. Curran Associates
  Inc., 2017.

\bibitem{sullivan2015introduction}
T.~J. Sullivan.
\newblock {\em Introduction to {U}ncertainty {Q}uantification}, volume~63.
\newblock Springer, 2015.

\bibitem{tamer2010partial}
E.~Tamer.
\newblock Partial {I}dentification in {E}conometrics.
\newblock {\em Annu. Rev. Econ.}, 2(1):167--195, 2010.

\bibitem{teneyUnderspecification}
D.~Teney, M.~Peyrard, and E.~Abbasnejad.
\newblock Predicting {I}s {N}ot {U}nderstanding: {R}ecognizing
  and {A}ddressing {U}nderspecification in {M}achine {L}earning.
\newblock In S.~Avidan, G.~Brostow, M.~Ciss{\'e}, G.~M. Farinella, and
  T.~Hassner, editors, {\em Computer Vision -- ECCV 2022}, pages 458--476,
  Cham, 2022. Springer Nature Switzerland.

\bibitem{datingToma}
C.~Toma, J.~Hancock, and N.~Ellison.
\newblock Separating {F}act {F}rom {F}iction: {A}n {E}xamination of {D}eceptive
  {S}elf-{P}resentation in {O}nline {D}ating {P}rofiles.
\newblock {\em Personality \& Social Psychology Bulletin}, 34:1023--36, 09
  2008.

\bibitem{van2018flexible}
S.~Van~Buuren.
\newblock {\em Flexible {I}mputation of {M}issing {D}ata}.
\newblock CRC press, 2018.

\bibitem{vasudevanBagel}
V.~Vasudevan, B.~Caine, R.~Gontijo-Lopes, S.~Fridovich-Keil, and R.~Roelofs.
\newblock When {D}oes {D}ough {B}ecome a {B}agel? {A}nalyzing the {R}emaining
  {M}istakes on {I}mage{N}et, 2022.

\bibitem{wickFairness}
M.~Wick, S.~Panda, and J.-B. Tristan.
\newblock Unlocking {F}airness: a {T}rade-off {R}evisited.
\newblock In H.~Wallach, H.~Larochelle, A.~Beygelzimer, F.~d\textquotesingle
  Alch\'{e}-Buc, E.~Fox, and R.~Garnett, editors, {\em Advances in Neural
  Information Processing Systems}, volume~32. Curran Associates, Inc., 2019.

\bibitem{censusWageGap}
M.~Wisniewski.
\newblock In {P}uerto {R}ico, {N}o {G}ap in {M}edian {E}arnings between {M}en
  and {W}omen.
\newblock Technical report, United States Census Bureau, 3 2021.

\bibitem{xiao-label-flipping}
H.~Xiao, H.~Xiao, and C.~Eckert.
\newblock Adversarial {L}abel {F}lips {A}ttack on {S}upport {V}ector
  {M}achines.
\newblock In {\em Proceedings of the 20th European Conference on Artificial
  Intelligence}, ECAI'12, page 870–875. IOS Press, 2012.

\bibitem{xin2022exploring}
R.~Xin, C.~Zhong, Z.~Chen, T.~Takagi, M.~Seltzer, and C.~Rudin.
\newblock Exploring the {W}hole {R}ashomon {S}et of {S}parse {D}ecision
  {T}rees.
\newblock In A.~H. Oh, A.~Agarwal, D.~Belgrave, and K.~Cho, editors, {\em
  Advances in Neural Information Processing Systems}, 2022.

\bibitem{zhang-label-flipping}
C.~Zhang, S.~Bengio, M.~Hardt, B.~Recht, and O.~Vinyals.
\newblock Understanding {D}eep {L}earning ({S}till) {R}equires {R}ethinking
  {G}eneralization.
\newblock {\em Commun. ACM}, 64(3):107–115, Feb. 2021.

\bibitem{zhang-debugging}
X.~Zhang, X.~Zhu, and S.~Wright.
\newblock Training {S}et {D}ebugging {U}sing {T}rusted {I}tems.
\newblock {\em Proceedings of the AAAI Conference on Artificial Intelligence},
  32(1), Apr. 2018.

\end{thebibliography}

\newpage
\appendix

\section{Additional Details From Section~\ref{sec:linear}}\label{app:algs}

\paragraph{A note on the relationship between $\rr$ and $\Delta$} 
Given a fixed $\rr$, if $\delta_i$ is the same for all $i$, the ratio between $\delta$ and $\rr$ uniquely determines robustness to dataset
multiplicity (we make use of this fact for computing \Cref{tab:reg_data}).

\subsection{Details about Algorithm~\ref{alg:exact}}
\Cref{alg:exact-full} is the complete algorithm for the approach described in \Cref{sec:exact}. Note that this algorithm supersedes \Cref{alg:exact}. 

We define the \emph{positive potential impact} $\rho^+_i$ as the maximal positive change that perturbing $\yp_i$ can have on $\matrprod\y$, likewise, $\rho^-_i$ is the \emph{negative potential impact}, that is, the maximal negative change that perturbing $\yp_i$ can have on $\matrprod\y$.

\Cref{alg:exact-full} finds the minimal label perturbation necessary to change the label of test point $\mathbf{x}$, a fact we formalize in the following theorem:

\begin{theorem}\label{thm:alg_exact}
Suppose we have a deterministic learning algorithm $A$, a training dataset $D=(\x,\y)$ where up to $k$ labels $y_i$ corresponding to data points $\x_i$ that satisfy $\phi(\x_i)$ are inaccurate by $+/- \Delta$. Let $\Bias(D)$ be the set of all datasets that can be constructed by modifying $D$ according to $k$, $\phi$, and $\Delta$. Let $F=A(\Bias(D))$ be the set of models $f$ obtainable by training using $A$ on any dataset $D'\in\Bias(D)$. Given a test point $\xtest$, (i) \Cref{alg:exact-full} outputs an interval containing all values $f(\xtest)$ for all $f\in F$ (i.e., the algorithm is sound) and (ii) there is some $f_1, f_2 \in F$ such that $f_1(\xtest)$ is equal to the upper bound of the output and $f_2(\xtest)$ is equal to the lower bound of the output (i.e., the algorithm is tight). 
\end{theorem}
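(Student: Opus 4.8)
The plan is to exploit the structural simplification that the label-error multiplicity model holds the feature matrix $\x$ fixed across every dataset in $\Bias_{\l,\Delta,\phi}(D)$. Consequently the row vector $\matrprod = \xtest^\top(\x^\top\x)^{-1}\x^\top$ is a \emph{constant} independent of the perturbation, and each model's prediction $f(\xtest) = \matrprod\yp$ is an \emph{affine} function of the perturbed label vector $\yp$. The two bounds claimed in the theorem are therefore the $\max$ and $\min$ of $\matrprod\yp$ over $\Bias_{\l,\Delta,\phi}(D)$, so it suffices to prove that \Cref{alg:exact} solves the maximization problem (the minimization case being symmetric with $\rho^-$ in place of $\rho^+$). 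First I would write $\matrprod\yp = \matrprod\y + \sum_i \matrprodscal_i(\yp_i - \y_i)$, reducing the task to maximizing the separable sum $\sum_i \matrprodscal_i \Delta_i$ over the increments $\Delta_i := \yp_i - \y_i$, subject to (a) $\Delta_i \in \delta_i = [\delta_i^l,\delta_i^u]$, (b) $\Delta_i = 0$ whenever $\phi(\x_i)$ fails, and (c) at most $\l$ of the $\Delta_i$ being nonzero.

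Next I would analyze each coordinate in isolation. For a single index $i$, maximizing $\matrprodscal_i \Delta_i$ over $\Delta_i \in \delta_i$ yields exactly $\rho^+_i = \matrprodscal_i \delta_i^u$ when $\matrprodscal_i \geq 0$ and $\rho^+_i = \matrprodscal_i \delta_i^l$ when $\matrprodscal_i < 0$, matching the algorithm's definition; when $\phi(\x_i)$ fails this maximal contribution is forced to $0$. The key observation is that, because $0 \in \delta_i$ is assumed, the ``do nothing'' choice $\Delta_i = 0$ is always feasible, so each $\rho^+_i \geq 0$ and we are never compelled to accept a negative contribution. The problem thus becomes: choose at most $\l$ coordinates and collect the sum of their nonnegative per-coordinate caps $\rho^+_i$. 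Since the contributions are independent across coordinates and each cap is achievable on its own, a standard greedy exchange argument shows the optimum is attained by selecting the $\l$ coordinates with the largest $\rho^+_i$, precisely the set $\rho^+_{i_1},\dots,\rho^+_{i_\l}$ chosen by the algorithm. Because every $\rho^+_i \geq 0$, ``largest by absolute value'' and ``largest by value'' coincide, and if fewer than $\l$ coordinates satisfy $\phi$ the algorithm simply modifies all eligible ones, which remains optimal.

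With this lemma in hand, both parts follow quickly. For soundness (i), any $f \in F$ arises from some feasible $\yp \in \Bias_{\l,\Delta,\phi}(D)$, and the per-coordinate bound together with the cardinality constraint gives $\matrprod\yp \le \matrprod\y + \sum_{j} \rho^+_{i_j}$, which is exactly the upper endpoint returned by the algorithm; the symmetric argument with $\rho^-$ bounds $\matrprod\yp$ from below, so every prediction $f(\xtest)$ lies in the output interval. For tightness (ii), the vector $\yu$ that the algorithm explicitly constructs is itself feasible, since it modifies at most $\l$ labels, only at indices where $\phi(\x_i)$ holds, and each within its interval $\delta_i$; hence $f_1 = A(\x,\yu)$ realizes the upper endpoint $\matrprod\yu$, and the analogous $\yl$ yields $f_2$ realizing the lower endpoint.

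I expect the main obstacle to be the greedy-optimality step, i.e., rigorously justifying that the cardinality-constrained maximum of the separable sum equals the sum of the $\l$ largest caps. The argument is routine for a sum of independent, nonnegatively-bounded terms, but it must be stated carefully to cover the degenerate cases, namely ties among the $\rho^+_i$ and the situation where strictly fewer than $\l$ indices are eligible under $\phi$, and it relies essentially on the assumption $0 \in \delta_i$, which guarantees feasibility of leaving any label untouched. Everything else, i.e., the affine reduction and the feasibility checks for $\yu$ and $\yl$, is mechanical once the fixed-$\matrprod$ observation is in place.
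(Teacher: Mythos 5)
Your proposal is correct, and it reaches the same two pillars as the paper's proof --- per-coordinate maximal impacts $\rho^+_i$ with top-$\l$ selection for the bound, and feasibility of the explicitly constructed $\yu$ (resp.\ $\yl$) for tightness --- but the route for soundness is genuinely different and, in fact, tighter. The paper argues by contradiction with a two-case analysis: either a hypothetical better perturbation uses the same support with a different magnitude or direction (case (a)), or it modifies some index the algorithm skipped, which is handled by a single pairwise swap (case (b)). As written, that swap argument only compares one discrepant index at a time and does not formally account for a competitor differing from the algorithm's choice at many indices simultaneously; making it airtight would require an induction on exchanges. Your direct formulation avoids this entirely: writing $\matrprod\yp = \matrprod\y + \sum_i \matrprodscal_i(\yp_i - y_i)$ reduces the problem to a separable, cardinality-constrained maximization, and the uniform bound $\matrprod\yp \le \matrprod\y + \sum_j \rho^+_{i_j}$ dominates \emph{every} feasible perturbation in one step. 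You also make explicit two points the paper's proof leaves implicit: that $0\in\delta_i$ forces $\rho^+_i \ge 0$ (so ``largest by absolute value'' coincides with ``largest by value,'' declining a modification is never beneficial, and the degenerate case of fewer than $\l$ eligible indices under $\phi$ is harmless), and that the tightness witness must be checked against all three feasibility constraints of $\Bias_{\l,\Delta,\phi}(D)$. The paper's proof buys brevity and mirrors the algorithm's line structure; yours buys rigor in exactly the places where the paper's sketch is thin, so the remaining work you flag (stating the greedy-exchange lemma carefully) is the right and only gap to close.
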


\begin{proof} We will prove (i) that the upper bound \Cref{thm:alg_exact}'s output is an upper bound on the value of $f(\xtest)$ for any $f\in F= A(\Bias(D))$ and (ii) that this upper bound is achieved by some $f_1\in F$. The proofs for the lower bounds are analogous. 

    (i) Let $u$ be the upper bound of \Cref{thm:alg_exact}'s output. We want to show that $f(\xtest)\leq u$ for all $f\in F=A(\Bias(D))$, where $$\Bias(D=(\x,\y))= \{(\x, \y')\mid \|\y'-\y\|_1 \leq k \land \y_i\neq \y'_i \implies \phi(\x_i) \land \|\y_i-\y'_i\|\leq \Delta\}$$.

    Suppose, towards contraction, that there is some $f'\in A(\Bias(D))$ such that $f'(\xtest)=u' > u$. Then, there is some set of labels $y_{i_1}, y_{i_2}, \ldots, y_{i_k}$ that can be modified to create a $\y'$ such that $(\x^T\x)^{-1}\x^T\y'\xtest = u'$. Recall that $z=\xtest(\x^T\x)^{-1}\x^T$. So, either (a), we modify the same set of labels, but modify at least one in a different magnitude or direction, or (b), there must be some ${i_j}$ that we modify $y_{i_j}$ to find $u'$, but the algorithm does not identify this index in line 8 of the algorithm. 

    First, suppose (a) occurred. WLOG, suppose the if case on line 2 is satisfied, i.e., $z_{i_j}\geq 0$. Then we hypothetically modified $y_{i_j}$ by $a\neq \delta^u_{i_j}$ in place of line 11. We know $a < \delta^u_{i_j}$ since $\delta$ is an upper bound on how much we can change each label. We have $\delta^u_{i_j}\geq 0$ and $z_{i_j}\geq 0$, so their product is also greater than 0, so $az_{i_j} < \delta^u_{i_j}z_{i_j}$. So, the final product $\mathbf{z}\y'$ cannot be larger than had we followed the algorithm.

    Now, suppose (b) occurred. Suppose $y_{i_j}$ is modified to yield $u'$, but is not modified in the algorithm. Then, there must be some $y_{i'}$ such that (assume WLOG that $\mathbf{z}_{i'}\geq 0$ and $\mathbf{z}_{i_j}\geq 0$) $\mathbf{z}_{i'}\delta^u_{i'} \geq \mathbf{z}_{i_j}\delta^u_{i_j}$. If equality holds, we will have $u=u'$. So, assume $\mathbf{z}_{i'}\delta^u_{i'} > \mathbf{z}_{i_j}\delta^u_{i_j}$. But then, modifying $\y_{i'}$ by $\delta^u_{i'}$ will result in a greater increase to $mathbf{z}\y$ than increasing $\y_{i_j}$ by $\delta^u_{i_j}$ will. So, changing $y_{i_j}$ cannot result in an output $u'>u$. 
    
    (ii) We need to construct the function $f\in F=A(\Bias(D))$ such that $f(\xtest)=u$, where $u$ is the upper bound of \Cref{thm:alg_exact}'s output. Let $y_{i_1}, y_{i_2},\ldots, y_{i_k}$ be the labels modified by lines 15-19 of the algorithm to yield some $\y'$. Then, let $f(x) = (\x^T\x)^{-1}\x^T\y'$.  
\end{proof}

\subsection{Details on the Approximate Approach}
We will next present an example to show that $\thetaset$ can be non-convex.

\begin{example}\label{ex:approx1}
Suppose $\y=(1,-1,2)$, $\Delta=[-1,1]^3$, and $\l=2$. Given $\matrprodc=\begin{pmatrix}1 & 2 & 1 \\ -1 & 0 &2 \\ 2 &1 & 0 \end{pmatrix}$, we have 
$$ \thetaset =  \matrprodc\y \cup 
\left\{\matrprodc\begin{pmatrix}a \\ b \\2 \end{pmatrix} \right\} \cup
\left\{\matrprodc\begin{pmatrix}a \\ -1 \\c \end{pmatrix}\right\} \cup
\left\{ \matrprodc\begin{pmatrix}1 \\ b \\c  \end{pmatrix}\right\} $$

for $a\in[0,2]$, $b\in[-2,0]$, and $c\in[1,3]$. 

Note that $(3,6,3)^\top\in\thetaset$ and $(4,5,2)^\top\in\thetaset$, but their midpoint $(3.5, 5.5, 3.5)^\top\notin\thetaset$, thus, $\thetaset$ is non-convex.
\end{example}

We present the complete algorithm for procedure described in \Cref{sec:approx} \Cref{alg:approx}. Next, we will show that this algorithm outputs the tightest hyperrectangular (i.e., box) enclosure of $\Bias$.

\begin{theorem}
    \Cref{alg:approx} computes the tightest hyperrectangular enclosure of $\Bias$. 
\end{theorem}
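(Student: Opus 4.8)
My plan is to reduce the statement to the elementary characterization of the tightest enclosing box and then invoke \Cref{thm:alg_exact} coordinatewise. Recall that $\thetaset^a$ is an interval vector in $\intv^n$, and that a hyperrectangle is exactly a product of intervals. First I would fix terminology: the \emph{tightest hyperrectangular enclosure} of $\thetaset$ is the interval vector $B$ with $B \supseteq \thetaset$ such that every interval vector $B'$ with $B' \supseteq \thetaset$ satisfies $B \subseteq B'$. The key observation is that the tightest enclosing box is determined coordinatewise: writing $\theta_i = \mathbf{c}_i\,\yp$ for the $i$-th coordinate of $\theta = \matrprodc\,\yp$ (where $\mathbf{c}_i$ is the corresponding row of $\matrprodc$), I claim the tightest box is precisely the one whose $i$-th side is $\left[\inf_{\theta\in\thetaset}\theta_i,\ \sup_{\theta\in\thetaset}\theta_i\right]$.

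To make this precise I would prove the two standard containments. For soundness, if $\theta \in \thetaset$ then $\theta_i \in \thetaset^a_i$ for each $i$ by definition of $\thetaset^a_i$ as a min/max over all $(\x,\yp)\in\Bias_{\l,\Delta,\phi}(D)$, so $\thetaset \subseteq \thetaset^a$. For minimality, suppose $B' = \prod_i [a_i,b_i]$ is any interval vector with $\thetaset \subseteq B'$. Projecting onto coordinate $i$ forces $a_i \le \theta_i \le b_i$ for every $\theta\in\thetaset$, hence $a_i \le \min_{\theta\in\thetaset}\theta_i$ and $b_i \ge \max_{\theta\in\thetaset}\theta_i$; therefore $\thetaset^a_i \subseteq [a_i,b_i]$ for every $i$, i.e. $\thetaset^a \subseteq B'$. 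Together these give that $\thetaset^a$ is the tightest hyperrectangular enclosure, \emph{provided} its endpoints are exactly the coordinatewise extrema of $\thetaset$.

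It remains to confirm that \Cref{alg:approx} actually computes these extrema, and this is where \Cref{thm:alg_exact} does the work. Each coordinate functional $\theta_i = \mathbf{c}_i\,\yp$ has exactly the same linear form in $\yp$ as the pointwise quantity $\matrprod\,\yp$ analyzed in \Cref{sec:exact}; the algorithm simply substitutes $\matrprod := \mathbf{c}_i$ and runs \Cref{alg:exact-full} to obtain the lower and upper endpoints of $\thetaset^a_i$. By part (i) of \Cref{thm:alg_exact} these endpoints bound $\mathbf{c}_i\,\yp$ over $\Bias_{\l,\Delta,\phi}(D)$ (soundness), and by part (ii) each endpoint is \emph{attained} by some $\yp\in\Bias_{\l,\Delta,\phi}(D)$, hence by some $\theta\in\thetaset$ (tightness). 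Consequently $\thetaset^a_i = \left[\min_{\theta\in\thetaset}\theta_i,\ \max_{\theta\in\thetaset}\theta_i\right]$, closing the gap left in the previous paragraph.

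The main obstacle I anticipate is the attainment claim rather than the enclosure claim: showing $\thetaset \subseteq \thetaset^a$ is immediate, but to rule out that some strictly smaller box still contains $\thetaset$ I must know the extrema are achieved inside $\thetaset$, which relies entirely on the tightness half of \Cref{thm:alg_exact}. I would also flag the one conceptual subtlety worth a sentence in the writeup: box-tightness only requires each side to be individually tight, so it is fine that no single $\theta\in\thetaset$ simultaneously realizes the extreme value in every coordinate — indeed $\thetaset$ is non-convex (\Cref{ex:approx1}) and generally contains no such corner point, yet $\thetaset^a$ is still the tightest enclosing hyperrectangle.
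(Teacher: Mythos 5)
Your proof is correct and takes essentially the same route as the paper's: soundness holds coordinatewise by definition of $\thetaset^a_i$, and minimality follows because each endpoint of $\thetaset^a_i$ is \emph{attained} by some element of $\thetaset$ (the paper packages this as a contradiction with any strictly smaller box, and, like you, notes that simultaneous attainment across coordinates is not needed). If anything you are more careful than the paper, which justifies the per-coordinate optimality of \Cref{alg:approx} only ``by construction,'' whereas you make the dependence explicit by instantiating \Cref{thm:alg_exact} with $\matrprod := \mathbf{c}_i$ for each coordinate.
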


\begin{proof}
First, we will show that \Cref{alg:approx} computes an enclosure of $\Bias$, and next we will show that this output is the tightest hyperrectangular enclosure of $\Bias$.

By construction of the algorithm, we see that the output will be an enclosure of $\Bias$. The algorithm constructs the maximal way to increase/decrease each coordinate.

Now, suppose there is another hyperrectangle ${{\thetaset}^a}'$ that with $\Bias(D)\subseteq{{\thetaset}^a}'$ and ${{\thetaset}^a}'_i\subset {\thetaset}^a_i$. 
WLOG, assume that the upper bound of ${\thetaset^a}'_i$ is strictly less than the upper bound of ${\thetaset}^a_i$. 
But ${\thetaset}^a_i=\max_{(\x,\y')\in\Bias(D)}(\matrprod_i\y')$, which means that $\y^*=\max_{(\x,\y')\in\Bias(D)}\matrprodc \y'$ has $\matrprodc_i\y^*$ greater than the upper bound of ${\thetaset^a}'_i$, and thus ${\thetaset^a}'$ is not a sound enclosure of $\thetaset$.
\end{proof}

\begin{algorithm}[ht]
    \caption{Solve for $V= [\min_{(\x,\yp)\in\Bias_{\l,\Delta,\phi}((\x,\y))} \matrprod\yp, \max_{(\x,\yp)\in\Bias_{\l,\Delta,\phi}((\x,\y))} \matrprod\yp]$ by finding perturbations of $\y$ that maximally decrease/increase $\matrprod\y$.}
    \begin{algorithmic}[1]
        \Require $\matrprod\in\mathbb{R}^n, (\x,\y)\in(\mathbb{R}^{n\times d},\mathbb{R}^n), \Delta\in\intdomain^n$ with $0\in\Delta$, $\l\geq 0$, $\phi:\mathbb{R}^d\rightarrow\{0,1\}$        \State $\yl\gets\y$ and $\yu\gets\y$
        \If{$z_i\geq 0$}
            \State $\rho^+_i \gets \matrprod_i\delta_i^u\ $, $\ \ \rho^-_i \gets \matrprod_i\delta_i^l$
        \Else 
            \State $\rho^+_i \gets \matrprod_i\delta_i^l\ $,  $\ \ \rho^-_i\gets \matrprod_i\delta_i^l$
        \EndIf
        \If{not $\phi(\mathbf{x}_i)$}
            \State $\rho^+_i\gets 0$, $\ \ \rho^-_i \gets 0$
        \EndIf
        \State Let $\rho^+_{i_1},\ldots,\rho^+_{i_l}$ be the $\l$ largest elements of $\rho^+$ by absolute value
        \ForEach{$\rho^+_{i_j}$}
            \If{$z_{i_j}\geq 0$}
                \State $(\yuscal)_{i_j} \gets (\yuscal)_{i_j} + \delta_{i_j}^u$
            \Else
                \State $(\yuscal)_{i_j}\gets (\yuscal)_{i_j} + \delta_{i_j}^l$
            \EndIf
        \EndFor
        \State Let $\rho^-_{i_1},\ldots,\rho^-_{i_l}$ be the $\l$ largest elements of $\rho^-$ by absolute value
        \ForEach{$\rho^-_{i_j}$}
            \If{$z_{i_j}\geq 0$}
                \State $(\ylscal)_{i_j}\gets (\ylscal)_{i_j} + \delta_{i_j}^l $
            \Else
                \State $(\ylscal)_{i_j} \gets (\ylscal)_{i_j} + \delta_{i_j}^u$
            \EndIf
        \EndFor
        \State $V=[\matrprod\yl, \matrprod\yu]$
    \end{algorithmic}\label{alg:exact-full}
\end{algorithm}

\begin{algorithm}
    \caption{Computing $\thetaset^a$}
    \begin{algorithmic}[1]
        \Require $\matrprodc\in\mathbb{R}^{d\times n}, (\x,\y)\in(\mathbb{R}^{n\times d},\mathbb{R}^n), \Delta\in\intdomain^n$ with $0\in\Delta$, $\l\geq 0$, $\phi:\mathbb{R}^d\rightarrow\{0,1\}$        \State $\yl\gets\y$ and $\yu\gets\y$

        \State $\ \thetaset^a\gets[0,0]^d$
        \For{$i$ \textrm{ in range} $d$}:
            \State $\yl\gets\y$, $\ \ \yu\gets\y$
            \If{$c_{ij} < 0$}
                \State  $\rho^+_j \gets \matrprodc_{ij}\delta_j^l\ $,  $\ \ \rho^-_j\gets \matrprodc_{ij}\delta_j^u$
            \Else
                \State $\rho^+_j\gets\matrprodc_{ij}\delta_j^u\ $,  $\ \ \rho^-_j\gets \matrprodc_{ij}\delta_j^l$
            \EndIf
            \If{not $\phi(\mathbf{x}_i)$}
                \State $\rho^+_j\gets 0$, $\ \ \rho^-_j\gets 0$
            \EndIf
            \State Let $\rho^+_{k_1},\ldots,\rho^+_{k_l}$ be the $\l$ largest elements of $\rho^+$ by absolute value.
            \ForEach{$\rho^+_{k_j}$}
                \If{$c_{i{k_j}}\geq 0$}
                    \State $(\yuscal)_{k_j} \gets (\yuscal)_{k_j} + \delta_{k_j}^u$
                \Else
                    \State $(\yuscal)_{k_j} \gets (\yuscal)_{k_j} + \delta_{k_j}^l$
                \EndIf
            \EndFor
            \State Let $\rho^-_{k_1},\ldots,\rho^-_{k_l}$ be the $\l$ largest elements of $\rho^-$ by absolute value.
            \ForEach{$\rho^-_{k_j}$}
                \If{$c_{i{k_j}} \geq 0$}
                    \State $(\ylscal)_{k_j} \gets (\ylscal)_{k_j} + \delta_{k_j}^l$
                \Else
                    \State $(\ylscal)_{k_j} \gets (\ylscal)_{k_j} + \delta_{k_j}^u$
                \EndIf
            \EndFor
        
            \State $\thetaset^a_i \gets [\mathbf{c}_i\yl, \mathbf{c}_i \yu]$
        \EndFor
    \end{algorithmic}\label{alg:approx}

\end{algorithm}

\section{Additional Experiments}
\label{app:more-experiments}

We present additional tables, graphs, and discussion about the experimental results. 

\paragraph{Accuracy}
The maximal accuracy for each dataset (i.e., in \cref{fig:tolerance}, the accuracy for the 0.0 line) is 76.5\% for Income, 61.9\% for LAR, and 98.9\% for MNIST. The exact values we used for $\lambda$ (as well as the procedure to obtain $\lambda$) are in the code.

\paragraph{Additional LAR data}
\Cref{fig:demo_rob_target_hmda} shows demographic-stratified dataset-multiplicity
robustness rates for LAR under different ways of defining targeted dataset multiplicity. To varying extents, the majority/advantaged group sees higher robustness rates as compared with the disadvantaged group across all versions of $\Bias$. 

\input{graphs/targeted_demo_robustness.tex}

\paragraph{Regression dataset results} \Cref{tab:reg_data} presents results on Income-Reg for the fixed robustness radius $\rr=\$\num{2000}$, which we chose as a challenging, but reasonable,  definition for two incomes being close. We also empirically validated that the ratio between $\Delta$ and $\epsilon$ uniquely determines robustness for a fixed multiplicity definition. Notably, for small $\Delta$ to $\rr$ ratios (i.e., when we can modify labels by small amounts, but predictions can be far apart and still considered robust), many test points are dataset-multiplicity robust, even when the number of untrustworthy training labels is relatively large (up to $10\%$). However, when this ratio is large, e.g., when $\Delta = 5\rr$, we are still able certify a majority of test points as robust up to 1\% bias. For example, if we can modify 1\% of labels by up to \$\num{10000}, then  69.6\% of test samples' predictions cannot be modified by more than \$\num{2000}. By contrast, only allowing 1\% of labels to be modified by up to \$\num{4000} yields a dataset-multiplicity robustness rate of 91.2\% (again, within a radius of \$\num{2000}). 

\begin{table}[t]
\small
\centering
\caption{Robustness rates (percentage of test dataset whose prediction cannot change by more than $\rr$) for Income-Reg given various $\Delta$ and $\l$ values. $\epsilon=\num{2000}$ in all experiments. Note that the shorthand $\Delta=a$ means $\Delta=[-a,a]^n$. Column 2 gives the ratio between the maximum label perturbation ($\Delta$) and the robustness radius ($\rr$), which uniquely determines robustness for a given $\l$. 
}
\label{tab:reg_data}
\begin{tabular}{rc|rrrrrrrrrr}\toprule
\multirow{2}{*}{$\Delta$}  & \multirow{2}{*}{Ratio $\frac{\Delta}{\epsilon}$} & \multicolumn{10}{c}{Maximum label error $\l$ as a percentage of training dataset size} \\ 
    &  & 1.0\%   & 2.0\%   & 3.0\%  & 4.0\%   & 5.0\%  & 6.0\% & 7.0\%  & 8.0\%  & 9.0\%  & 10.0\% \\ \midrule
\num{1000} & 0.5  & 100.0 & 100.0 & 100.0 & 100.0 & 99.8 & 99.4 & 99.1 & 98.6 & 98.0 & 97.1 \\
\num{2000} & 1    & 100.0 & 96.6  & 91.1  & 85.4  & 84.1 & 76.4 & 73.3 & 64.0 & 49.9 & 35.2 \\
\num{4000} & 2    & 91.2  & 84.2  & 69.2  & 35.7  & 14.2 & 2.0  & 0    & 0    & 0    & 0    \\
\num{6000}  & 3    & 85.9  & 61.1  & 15.4  & 0     & 0    & 0    & 0    & 0    & 0    & 0    \\
\num{8000}  & 4    & 80.1  & 20.0  & 0     & 0     & 0    & 0    & 0    & 0    & 0    & 0    \\
\num{10000}  & 5   & 69.6  & 3.6   & 0     & 0     & 0    & 0    & 0    & 0    & 0    & 0    \\ \bottomrule   
\end{tabular}
\end{table}

\paragraph{Income demographics} \Cref{tab:statesdemo} shows the demographic make-up of various states' data from the Income dataset. Notice that Oregon has the lowest percentage of Black people in the dataset - we suspect that this is why the robustness rates between White and Black people is so large.

\begin{table}[t]
\small
\centering
\caption{Summary of data download by state from the Folktables Income task.}
\label{tab:statesdemo}
\begin{tabular}{l|rrr}\toprule
State & training $n$ & \% White & \% Black \\ \midrule
Georgia & 40731 & 67.6 & 23.9 \\
Louisiana & 16533 & 70.9 & 23.5 \\
Maryland & 26433 & 63.6 & 23.5 \\
Oregon & 17537 & 86.4 & 1.4 \\
Wisconsin & 26153 & 92.7 & 2.6 \\
\bottomrule   
\end{tabular}
\end{table}

\Cref{fig:statesdemo} shows robustness rates, stratified by race or gender, for five U.S. states on the Income dataset. We see that for most states, there is a significant gap in robustness rates across with race and gender. In particular, Georgia and Louisiana have higher robustness rates for Black people, and Oregon has \emph{drastically} higher robustness rates for White people. All states, except Wisconsin, have higher robustness rates for men than women. 

\input{graphs/income_states.tex}

\Cref{fig:approx_demo_income} shows the demographic group-level robustness rates under the over-approximate technique. 

\input{graphs/approx_demo.tex}

\subsection{Running time}
\Cref{tab:timedata} shows the running time of our techniques, as evaluated on a 2020 MacBook Pro with 16GB memory and 8 cores. These times should be interpreted as upper bounds; in practice, both approaches are amenable to parallelization, which would yield faster performance.  We notice that both the exact approach scales linearly with the number of samples, while the approximate approach stays within a single order of magnitude as the number of samples grows.Clearly, the approximate approach is more scalable for checking the robustness of large numbers of data points. 

\begin{table}[t]
\small
\centering
\caption{Running time, in seconds, for exact and approximate experiments. The exact experiments flip labels for each data point until the sample is no longer robust. The approximate experiments flip 10\% of the labels (which is enough to bring the robustness to 0\%).}
\begin{tabular}{l|rrrrrr}\toprule
\multirow{2}{*}{Dataset} &  \multicolumn{2}{c}{\num{100} samples} & \multicolumn{2}{c}{\num{1000} samples} & \multicolumn{2}{c}{\num{10000} samples} \\
  &   Exact & Approx. & Exact & Approx. & Exact & Approx. \\\midrule
Income   & 2.5 & 4.4 & 37.2 & 6.8  & 383.5 & 30.7 \\
LAR    & 7.5 & 1.8&   73.5 &   2.6 &  730.2 & 10.6 \\
MNIST 1/7  & 4.1 & 3.8 & 24.8 & 6.0 &  448.5 & 24.3 \\\bottomrule                          
\end{tabular}\label{tab:timedata}
\end{table}

\Cref{tab:milp_data} shows the running time of the MILP solver for Income and LAR. We see that the times scale linearly (as with our exact approach), but are much worse. To check robustness for 100 samples, it is over 80\% slower to use MILP.

\begin{table}[t]
\small
\centering
\caption{Running time, in seconds, for the MILP solver. We modify 10\% of the labels. }
\begin{tabular}{l|rr}\toprule
Dataset & 10 samples & 100 samples \\\midrule
Income   & 49.1 & 498.8 \\
LAR    & 58.1 & 616.6  \\
\end{tabular}\label{tab:milp_data}
\end{table}

\end{document}